\newtheorem{theorem}{Theorem}[section]
\newtheorem{lemma}[theorem]{Lemma}
\newtheorem{theoremInf}[theorem]{Theorem (informal version)}
\newtheorem{corollary}[theorem]{Corollary}
\newtheorem{definition}[theorem]{Definition}
\newtheorem{proposition}[theorem]{Proposition}
\newtheorem{observe}[theorem]{Observation}
\newtheorem{remark1}[theorem]{Remark}
\newenvironment{remark}{\begin{remark1} \rm}{\end{remark1}}
\DeclareMathOperator\rect{rect}
\DeclareMathOperator\supp{supp}
\DeclarePairedDelimiter{\ceil}{\lceil}{\rceil}
\title{Provable approximation properties for deep neural networks}
\author[1]{Uri Shaham}
\author[2]{Alexander Cloninger}
\author[2]{Ronald R. Coifman}
\affil[1]{Statistics department, Yale University}
\affil[2]{Applied Mathematics program, Yale University}
\date{}                                           
\begin{document}
\maketitle

\begin{abstract}
We discuss approximation of functions using deep neural nets. Given a function $f$ on a $d$-dimensional manifold $\Gamma \subset \mathbb{R}^m$, we construct a sparsely-connected depth-4 neural network and bound its error in approximating $f$. The size of the network depends on dimension and curvature of the manifold $\Gamma$, the complexity of $f$, in terms of its wavelet description, and only weakly on the ambient dimension $m$. Essentially, our network computes wavelet functions, which are computed from Rectified Linear Units (ReLU). 

\end{abstract}

\section{Introduction} \label{sec:intro}
In the last decade, deep learning algorithms achieved unprecedented success and state-of-the-art results in various machine learning and artificial intelligence tasks, most notably image recognition, speech recognition, text analysis and Natural Language Processing~\cite{lecun2015deep}. Deep Neural Networks (DNNs) are general in the sense of their mechanism for learning features of the data. Nevertheless, in numerous cases, results obtained with DNNs outperformed previous state-of-the-art methods, often requiring significant domain knowledge, manifested in hand-crafted features.

Despite the great success of DNNs in many practical applications, the theoretical framework of DNNs is still lacking; along with some decades-old well-known results, developing aspects of such theoretical framework are the focus of much recent academic attention. In particular, some interesting topics are (1) specification of the network topology (i.e., depth, layer sizes), given a target function, in order to obtain certain approximation properties, (2) estimating the amount of training data needed in order to generalize to test data with high accuracy, and also (3) development of training algorithms with performance guarantees.

\subsection{The contribution of this work}
In this manuscript we discuss the first topic. Specifically, we prove a formal version of the following result:
\begin{theoremInf}
Let $\Gamma \subset \mathbb{R}^m$ be a smooth $d$-dimensional manifold, $f \in L_2(\Gamma)$ and let $\delta>0$ be an approximation level.  Then there exists a depth-4 sparsely-connected neural network with $N$ units where $N=N(\delta, \Gamma, f, m)$, computing the function $f_N$  such that 
\begin{equation}
\|f-f_N\|_2^2 \le \delta.
\end{equation}
\end{theoremInf}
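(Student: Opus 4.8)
The plan is to build $f_N$ as a finite linear combination of localized wavelet-type functions, each realized by a small ReLU sub-network, and to control the total error by combining a wavelet truncation estimate with the geometric cost of charting $\Gamma$.

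First I would reduce to the flat case by covering $\Gamma$ with finitely many patches. Since $\Gamma$ is smooth and compact (bounded curvature, equivalently positive reach), there is a radius $r>0$, depending only on $d$ and the curvature of $\Gamma$, such that on each ball of radius $r$ the orthogonal projection $\pi_i \colon \mathbb{R}^m \to T_{p_i}\Gamma \cong \mathbb{R}^d$ onto the tangent space at a center $p_i$ is a bi-Lipschitz near-isometry. Choosing a finite set of centers $p_1,\dots,p_L$ whose $r$-balls cover $\Gamma$ — the number $L$ being governed by $\vol(\Gamma)$, $d$ and the curvature — together with a subordinate partition of unity $\{\eta_i\}$, I would write $f=\sum_i \eta_i f$ and transport each piece $\eta_i f$ to $\mathbb{R}^d$ via $\pi_i$. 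The crucial point is that $\pi_i$ is an \emph{affine} map of $\mathbb{R}^m$, so it is computed exactly by the first (linear) layer of the network, and this is the only place where the ambient dimension $m$ enters; the number of units is unaffected by $m$.

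Next I would approximate each transported piece by a truncated wavelet expansion on $\mathbb{R}^d$. Expanding in a wavelet frame $\{\psi_\lambda\}$ and keeping the $K$ largest coefficients makes $\|\eta_i f-\sum_{\lambda} c_\lambda\psi_\lambda\|_2^2$ as small as desired, with $K$ controlled by the wavelet (Besov) complexity of $f$; summing over patches fixes the global budget $\delta$ and the total count $N$. It then remains to realize each wavelet $\psi_\lambda$ by ReLUs. In one variable a trapezoidal bump is an explicit combination of four shifted ReLUs, so one linear layer (applying the scale and shift defining $\psi_\lambda$) followed by one ReLU layer produces the $d$ coordinate bumps $\phi_1,\dots,\phi_d$. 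To assemble the $d$-dimensional bump without a multiplication gate I would use the identity that $\max\{0,\ \sum_{j=1}^d \phi_j(x_j)-(d-1)\}$ equals the product-type bump: it is $1$ when every $\phi_j=1$ and $0$ as soon as one $\phi_j=0$. This is a single additional ReLU layer, and a final linear layer forms $\sum_\lambda c_\lambda\psi_\lambda$. Counting layers — projection, ramps, box-assembly, weighted sum — gives depth $4$, and since each bump reads only the $d$ coordinates of its patch and uses $O(d)$ units, the network is sparse.

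Finally I would aggregate the errors: the near-isometry distortion of each $\pi_i$, the partition-of-unity overlap, the wavelet truncation, and the piecewise-linear modelling of any smooth wavelet profile. Each contributes a term that can be driven below a prescribed fraction of $\delta$ by shrinking $r$ and increasing $K$, so their sum is at most $\delta$. I expect the main obstacle to be the geometric bookkeeping rather than the ReLU constructions: one must take the patch radius small enough (relative to curvature) that tangent-plane coordinates faithfully represent the intrinsic $L_2$ geometry of $\Gamma$, yet not so small that $L$, and hence $N$, blows up — and one must verify that the flat-chart wavelet estimates reassemble into a genuine bound for $\|f-f_N\|_2^2$ on the curved manifold.
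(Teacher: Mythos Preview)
Your overall architecture---atlas, partition of unity, ReLU trapezoid bumps combined via $\max\{0,\sum_j\phi_j-(d-1)\}$, and the four-layer count (projection, ramps, assembly, sum)---matches the paper's construction almost exactly. There is, however, one genuine gap. You propose that the first layer simply applies the affine projection $\pi_i:\mathbb{R}^m\to\mathbb{R}^d$ and that all subsequent layers work purely in $\mathbb{R}^d$. This makes the $i$th branch of the network constant along the $(m-d)$ directions orthogonal to $T_{p_i}\Gamma$. But $\Gamma$ is compact and embedded, so there will be points $x\in\Gamma\setminus U_i$ with $\pi_i(x)\in\pi_i(U_i)$ (think of the antipodal region of a sphere projecting back over the chart). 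At such $x$ your branch outputs the wavelet approximation of $(\eta_i f)\circ\pi_i^{-1}$ evaluated at $\pi_i(x)$, which is generally nonzero, whereas the correct value is $(\eta_i f)(x)=0$. Shrinking the patch radius does not help: this is a global phenomenon, not local distortion. The paper fixes this by \emph{not} discarding the orthogonal coordinates: each $d$-dimensional scaling function is extended to $\mathbb{R}^m$ by adjoining $m-d$ extra trapezoid factors in the normal directions, constant on $[-r_1,r_1]$ and vanishing outside $[-r_2,r_2]$, so that the extended wavelet agrees with the flat one on $U_i$ but is killed before $\Gamma$ returns (their Lemma on points outside $U_i$). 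This costs $4(m-d)$ additional rectifiers per chart and is precisely the mechanism by which $m$ enters the unit count only additively.

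A smaller point: you list ``piecewise-linear modelling of any smooth wavelet profile'' as an error source to be controlled. The paper sidesteps this entirely by proving directly that the trapezoid-based kernels already form a legitimate family of averaging kernels, so the resulting piecewise-linear system is itself a wavelet frame of $L_2(\mathbb{R}^d)$. There is no smooth wavelet being approximated; the only analytic error is the frame-truncation error for each $\hat f_i$.
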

The number $N=N(\delta, \Gamma, f, m)$ depends on the complexity of $f$, in terms of its wavelet representation, the curvature and dimension of the manifold $\Gamma$ and only weakly on the ambient dimension $m$, thus taking advantage of the possibility that $d \ll m$, which seems to be realistic in many practical applications.
Moreover, we specify the exact topology of such network, and show how it depends on the curvature of $\Gamma$, the complexity of $f$, and the dimensions $d$, and $m$.
Lastly, for two classes of functions we also provide approximation error rates: $L_2$ error rate for functions with sparse wavelet expansion and point-wise error rate for functions in $C^2$:
\begin{itemize}
\item if $f$ has wavelet coefficients in $l_1$ then there exists a depth-4 network and a constant $c$ so that
\begin{equation}
\|f-f_N\|_2^2 \le \frac{c}{N}
\end{equation}
\item if $f \in C^2$ and has bounded Hessian, then there exists a depth-4 network so that
\begin{equation}
\|f - f_N\|_\infty = O\left(N^{-\frac{2}{d}} \right).
\end{equation}
\end{itemize}
 
\subsection {The structure of this manuscript}
 The structure of this manuscript is as follows: in Section \ref{sec:relatedWork} we review some of the fundamental theoretical results in neural network analysis, as well as some of the recent theoretical developments. In Section \ref{sec:preliminaries} we give quick technical review of the mathematical methods and results that are used in our construction. In Section \ref{sec:Main} we describe our main result, namely construction of deep neural nets for approximating functions on smooth manifolds. In Section \ref{sec:counting} we specify the size of the network needed to learn a function $f$, in view of the construction of the previous section.
Section \ref{sec:conclusions} concludes this manuscript.

\subsection{Notation}
$\Gamma$ denotes a $d$-dimensional manifold in $\mathbb{R}^m$. $\{(U_i, \phi_i) \}$ denotes an atlas for $\Gamma$. Tangent hyper-planes to $\Gamma$ are denoted by $H_i$. $f$ and variants of it stand for the function to be approximated.  $\varphi, \psi$ are scaling (aka "father") and wavelet (aka "mother") functions, respectively. The wavelet terms are indexed by scale $k$ and offset $b$.
The support of a function $f$ is denoted by $\supp(f)$.

\section{Related work}
\label{sec:relatedWork}
There is a huge body of theoretical work in neural network research. In this section, we review some classical theoretical results on neural network theory, and discuss several recent theoretical works.

A well known result, proved independently by Cybenko \cite{cybenko1989approximation}, Hornik \cite{hornik1991approximation} and others states that Artificial Neural Networks (ANNs) with a single hidden layer of sigmoidal functions can approximate arbitrary closely any compactly supported continuous function. This result is known as the ``Universal Approximation Property''. It does not relate, however, the number of hidden units and the approximation accuracy; moreover, the hidden layer might contain a very large number of units. 
Several works propose extensions of the universal approximation property (see, for example\cite{girosi1990networks, girosi1995regularization}, for a regularization perspective and also using radial basis activation functions, \cite{leshno1993multilayer} for all activation functions that achieve the universal approximation property).

The first work to discuss the approximation error rate  was done by Barron \cite {barron1993universal}, who showed that given a function $f:\mathbb{R}^m \rightarrow \mathbb{R}$ with bounded first moment of the magnitude of the Fourier transform
 \begin{equation}
 C_f=\int_{\mathbb{R}^m}|w||\tilde{f}(w)| < \infty \label{eq:barronReq}
 \end{equation}
 there exists a neural net with a single hidden layer of $N$ sigmoid units, so that the output $f_N$ of the network satisfies
 \begin{equation}
 \|f-f_N \|_2^2 \le \frac{c_f}{N},
 \end{equation}
where $c_f$ is proportional to $C_f$. We note that the requirement~\eqref{eq:barronReq} gets more restrictive when the ambient dimension $m$ is large, and that the constant $c_f$ might scale with $m$. The dependence  on $m$ is improved in \cite{mhaskar2004tractability}, \cite{kurkova2002comparison}. In particular, in \cite{mhaskar2004tractability} the constant is improved to be polynomial in $m$.
For $r$ times differentiable functions, Mahskar \cite{mhaskar1996neural} constructs a network with a single hidden layer of $N$ sigmoid units (with weights that do not depend on the target function) that achieves an approximation error rate 
\begin{equation}
\|f-f_N \|_2^2 = \frac{c}{N^{2r/m}},
\end{equation}
which is known to be optimal. This rate is also achieved (point-wise) in this manuscript, however, with respect to the dimension $d$ of the manifold, instead of $m$, which might be a significant difference when $d \ll m$. 

During the decade of $1990$s, a popular direction in neural network research was to construct neural networks in which the hidden units compute wavelets functions (see, for example \cite {zhang1992wavelet}, \cite{pati1993analysis} and \cite{zhao1998multidimensional}). These works, however, do not give any specification of network architecture to obtain desired approximation properties.

Several most interesting recent theoretical results consider the representation properties of neural nets. 
Eldan and Shamir~\cite{eldan2015power} construct a radial function that is efficiently expressible by a 3-layer net, while requiring exponentially many units to be represented accurately by shallower nets. 
In \cite{montufar2014number}, Montufar et al. show that DNNs can represent more complex functions than can represent a shallow network with the same number of units, where complexity is defined as the number of linear regions of the function. 
Tishby and Zaslavsky~\cite {tishby2015deep} propose to evaluate the representations obtained by deep networks via the information bottleneck principle, which is a trade-off between compression of the input representation and predictive ability of the output function, however do not provide any theoretical results.

A recent work by Chui and Mhaskar brought to our attention~\cite{Chui2015deep} constructs a network with similar functionality to the network we construct in this manuscript. In their network the low layers map the data to local coordinates on the manifold and the upper ones approximate a target function on each chart, however using B-splines.
 

\section{Preliminaries}
\label{sec:preliminaries}

\subsection {Compact manifolds in $\mathbb{R}^m$}
\label{sec:compactManifolds}
In this section we review the concepts of \textit{smooth manifolds}, \textit{atlases} and \textit{partition of unity}, which will all play important roles in our construction.

Let $\Gamma \subseteq \mathbb{R}^m$ be a compact $d$-dimensional manifold. We further assume that $\Gamma$ is smooth, and that there exists $\delta>0$ so that for all $x\in\Gamma$, $B(x,\delta)\cap \Gamma$ is diffeomorphic to a disc, with a map that is close to the identity.

\begin{definition}
A  \textbf{chart} for $\Gamma$ is a pair $(U, \phi) $ such that $U \subseteq \Gamma$ is open and 
\begin{equation}
\phi:U \rightarrow M,
\end{equation}
where $\phi$ is a homeomorphism and $M$ is an open subset of a Euclidean space.
\end{definition}
One way to think of a chart is as a tangent plane at some point $x \in U \subseteq \Gamma$, such that the plane defines a Euclidean coordinate system on $U$ via the map $\phi$.

\begin{definition}
An  \textbf{atlas} for $\Gamma$ is a collection $\{(U_i, \phi_i) \}_{i \in I}$ of charts such that $\cup_i U_i=\Gamma $.
\end{definition}
\begin{definition}
Let $\Gamma$ be a smooth manifold. A \textbf{partition of unity} of $\Gamma$ w.r.t an open cover $\{U_i \}_{i\in I}$ is a family of nonnegative smooth functions $\{\eta_i\}_{i\in I}$ such that for every $x \in \Gamma$, 
$\sum_i \eta_i(x)=1$ and for every $i$, $\supp(\eta_i) \subseteq(U_i)$.
\end{definition}

\begin{theorem} (Proposition $13.9$ in \cite{loring2008introduction}) 
\label{thm:PartitionOfUnity}
Let $\Gamma$ be a compact manifold and $\{U_i \}_{i \in I}$ be an open cover of $\Gamma$. Then there exists a partition of unity $\{\eta_i\}_{i\in I}$ such that for each $i$, $\eta_i$ is in $C^\infty$, has compact support and $\supp(\eta_i)\subseteq U_i$.

\end{theorem}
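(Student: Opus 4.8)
The plan is to build the partition of unity from smooth compactly supported bump functions transported to $\Gamma$ through the charts, then to normalize them and regroup by index. The starting ingredient is the standard one-dimensional $C^\infty$ cutoff built from $g(t)=e^{-1/t}$ for $t>0$ and $g(t)=0$ for $t\le 0$; from $g$ one assembles a smooth $h:\mathbb{R}\to[0,1]$ with $h\equiv 1$ on $(-\infty,1]$ and $h\equiv 0$ on $[2,\infty)$, and hence a radial bump on any Euclidean space that equals $1$ on a ball and vanishes outside a slightly larger ball.

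First, for each $x\in\Gamma$ I would pick an index $i(x)\in I$ with $x\in U_{i(x)}$ and a chart $(V_x,\phi_x)$ with $x\in V_x$. Shrinking if necessary, I choose an open $W_x\ni x$ whose closure satisfies $\overline{W_x}\subset V_x\cap U_{i(x)}$. Pulling the Euclidean bump back through $\phi_x$ and extending by zero, I obtain a smooth $\beta_x:\Gamma\to[0,1]$ that equals $1$ on a neighborhood $O_x$ of $x$ and has $\supp(\beta_x)\subset\overline{W_x}\subset U_{i(x)}$. The collection $\{O_x\}_{x\in\Gamma}$ is then an open cover of $\Gamma$.

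Next I invoke compactness: extract a finite subcover $O_{x_1},\dots,O_{x_N}$ with associated bumps $\beta_1,\dots,\beta_N$ (writing $\beta_j:=\beta_{x_j}$), each smooth, compactly supported, with $\supp(\beta_j)\subset U_{i(x_j)}$. Since every point of $\Gamma$ lies in some $O_{x_j}$ on which $\beta_j\equiv 1$, the sum $S:=\sum_{j=1}^N\beta_j$ is a strictly positive smooth function, so $\theta_j:=\beta_j/S$ are smooth, nonnegative, satisfy $\sum_j\theta_j\equiv 1$, and have $\supp(\theta_j)=\supp(\beta_j)\subset U_{i(x_j)}$. Finally, to index by $I$ and secure subordination to the original cover, I set $\eta_i:=\sum_{j:\,i(x_j)=i}\theta_j$ (an empty sum being $0$). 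Each $\eta_i$ is then smooth, $\sum_i\eta_i\equiv 1$, and $\supp(\eta_i)$ is a finite union of sets contained in $U_i$, hence $\supp(\eta_i)\subseteq U_i$; its compactness is automatic since $\Gamma$ is compact. This is the asserted partition of unity.

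The only genuinely delicate step is the local construction of $\beta_x$ together with the containment $\overline{W_x}\subset V_x\cap U_{i(x)}$: one must shrink the chart neighborhood so that the bump's support lies inside a single $U_i$ while still covering a neighborhood of $x$, and check that the transported function remains $C^\infty$ across the chart boundary, which it does because it is identically zero there. Everything afterward --- positivity of $S$, smoothness of the quotients $\theta_j$, and the regrouping by index --- is routine, with compactness reducing all sums to finite ones so that no convergence issues arise.
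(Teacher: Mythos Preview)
Your proof is correct and follows the standard construction. Note, however, that the paper does not actually prove this statement: it is quoted verbatim as Proposition~13.9 from \cite{loring2008introduction} and used as a black box, so there is no ``paper's own proof'' to compare against. Your argument is essentially the one given in that reference (bump functions via charts, finite subcover by compactness, normalization, regrouping by index), so there is nothing to add.
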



\subsection {Harmonic analysis on spaces of homogeneous type}
\label{sec:Harmonic}
\subsubsection {Construction of wavelet frames}
In this section we cite several standard results, mostly from \cite {deng2009harmonic}, showing how to construct a wavelet frame of $L_2(\mathbb{R}^d)$, and discuss some of its properties.

\begin{definition}(Definition $1.1$ in \cite {deng2009harmonic})\\
A \textbf{space of homogeneous type} $(\mathcal{X}, \mu, \delta)$ is a set $\mathcal{X}$ together with a measure $\mu$ and a quasi-metric $\delta$ (satisfies triangle inequality up to a constant $A$) such that for every $x\in \mathcal{X},\; r>0$
\begin{itemize}
\item $0<\mu(B(x,r))<\infty$
\item There exists a constant $A'$ such that $\mu(B(x,2r))\le A' \mu(B(x,r))$
\end{itemize}
\end{definition}
In this manuscript, we are interested in constructing a wavelet frame on $\mathbb{R}^d$, which, equipped with Lebesgue measure and the Euclidean metric, is a  space of homogeneous type.

\begin{definition}(Definition $3.14$ in \cite {deng2009harmonic})\label{def:fatherMother}\\
Let $(\mathcal{X}, \mu, \delta)$ be a space of homogeneous type. 
A family of functions $\{S_k \}_{k\in \mathbb{Z}}$, $S_k: \mathcal{X} \times \mathcal{X} \rightarrow \mathbb{C}$ is said to be a family of \textbf{averaging kernels} (``father functions'') if conditions  $3.14-3.18$ and $3.19$ with $\sigma=\epsilon$  in \cite {deng2009harmonic} are satisfied. A family $\{D_k \}_{k\in \mathbb{Z}}$, $D_k: \mathcal{X} \times \mathcal{X} \rightarrow \mathbb{C}$ is said to be a family of  (``mother'') \textbf{wavelets} if for all $x,y\in \mathcal{X}$, 
\begin{equation} \label{eq:motherD}
D_k(x,y) =  S_k(x,y)-S_{k-1}(x,y),
\end{equation}
and $S_k, S_{k-1}$ are averaging kernels.
\end{definition}
By standard wavelet terminology, we denote
\begin{equation}
\psi_{k,b}(x) \equiv 2^{-\frac{k}{2}} D_{k}(x,b). \label{eq:mother}
\end{equation}
\begin{theorem} (A simplified version of Theorem $3.25$ in \cite {deng2009harmonic}) \label{thm:3.25}\\
Let $\{S_k \}$ be a family of averaging kernels. Then there exist families $\{\psi_{k,b}\},\{\widetilde{\psi}_{k,b}\}$  
such that for all $f \in L_2(\mathbb{R}^d)$
\begin{equation}
f(x) = \sum_{(k,b)\in\Lambda}  \langle f,\widetilde{\psi}_{k,b}\rangle\psi_{k,b}(x)
\end{equation}
Where the functions $\psi_{k,b}$ are given by Equations \eqref{eq:motherD} and \eqref{eq:mother} and $\Lambda =\{(k,b) \in \mathbb{Z} \times \mathbb{R}^d$: $b \in 2^{-\frac{k}{d}}\mathbb{Z}^d\}$.
\end{theorem}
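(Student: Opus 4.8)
The plan is to establish a discrete Calderón-type reproducing formula, turning the continuous resolution of the identity supplied by the averaging kernels into the claimed expansion indexed by $\Lambda$. First I would record the underlying continuous formula. Conditions $3.14$--$3.19$ make $\{S_k\}$ an approximate identity: the operators $S_k f(x)=\int S_k(x,y)f(y)\,d\mu(y)$ satisfy $S_k\to \mathrm{Id}$ strongly on $L_2(\mathbb{R}^d)$ as $k\to\infty$ (finer averaging) and $S_k\to 0$ as $k\to-\infty$. Telescoping the definition $D_k=S_k-S_{k-1}$ then yields
\begin{equation}
f=\sum_{k\in\mathbb{Z}} D_k f \quad\text{in } L_2(\mathbb{R}^d),
\end{equation}
the continuous Calderón reproducing formula attached to the kernels.

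Second, I would prove almost orthogonality of the operators $D_k$. The size, Hölder-regularity, and cancellation ($\int D_k(x,y)\,d\mu(y)=0$) encoded in Definition~\ref{def:fatherMother} give an estimate of the form $\|D_j D_k\|_{L_2\to L_2}\lesssim 2^{-\epsilon|j-k|}$ for some $\epsilon>0$. Feeding this into the Cotlar--Stein lemma controls the partial sums, guarantees convergence of the series above, and legitimizes the operator manipulations that follow.

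Third --- the technical heart --- I would discretize. Using Christ's dyadic cube decomposition of the space of homogeneous type, the lattice $2^{-k/d}\mathbb{Z}^d$ furnishes cube centers $b$ at scale $k$ with $\mu(Q_{k,b})\sim 2^{-k}$. I would replace each integral $D_k f(x)=\int D_k(x,y)f(y)\,d\mu(y)$ by the Riemann-type sum $\sum_b \mu(Q_{k,b})\,D_k(x,b)\,(D_k f)(b)$, bounding the difference by the oscillation of the smooth functions $D_k(x,\cdot)$ and $D_k f$ across a cube of radius $2^{-k/d}$; the Hölder conditions make this oscillation uniformly $O(2^{-\epsilon})$. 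Summing over $k$ and invoking the almost-orthogonality bounds shows that the discretization operator
\begin{equation}
T f=\sum_{(k,b)\in\Lambda} \mu(Q_{k,b})\,D_k(\cdot,b)\,(D_k f)(b)
\end{equation}
differs from the identity by an operator whose norm can be made strictly less than $1$ once the sampling is taken fine enough.

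Finally, $T$ is invertible on $L_2$ by a Neumann series, and setting $\psi_{k,b}(x)=2^{-k/2}D_k(x,b)$ as in \eqref{eq:mother} while defining the dual functions $\widetilde{\psi}_{k,b}$ by applying $T^{-1}$ to the volume-weighted kernels $\overline{D_k(b,\cdot)}$ produces the asserted reproducing formula $f=\sum_{(k,b)\in\Lambda}\langle f,\widetilde{\psi}_{k,b}\rangle\psi_{k,b}$. I expect the discretization step to be the main obstacle: making the error operator small requires carefully balancing the sampling density $2^{-k/d}$ against the regularity exponent in conditions $3.14$--$3.19$, and the Cotlar--Stein bookkeeping needed to sum the sampling errors uniformly across all scales $k\in\mathbb{Z}$ is where the genuine work lies.
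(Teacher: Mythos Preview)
The paper does not actually prove this statement: Theorem~\ref{thm:3.25} is quoted verbatim as a simplified version of Theorem~3.25 in the cited reference \cite{deng2009harmonic}, and the manuscript simply invokes it as a black box. There is therefore no ``paper's own proof'' to compare against.

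That said, your outline is the standard route by which such discrete Calder\'on reproducing formulas are established in the literature, and it matches in spirit the argument in Deng--Han: telescope $\mathrm{Id}=\sum_k D_k$, use the regularity/cancellation conditions to get almost-orthogonality estimates of Cotlar--Stein type, discretize the integral in $b$ over a dyadic lattice at scale $2^{-k/d}$, bound the discretization error so that the resulting operator is a small perturbation of the identity, and invert by a Neumann series to produce the dual family $\{\widetilde{\psi}_{k,b}\}$. Your identification of the discretization step as the crux is accurate; that is exactly where the size of the sampling mesh has to be matched to the H\"older exponent $\sigma$ in conditions~3.15--3.16. One small wrinkle worth flagging: in your operator $T$ you sample $(D_k f)(b)$ rather than $f(b)$, which implicitly discretizes $D_k^2$ rather than $D_k$ itself. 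This is in fact the usual device (one needs the extra smoothing to control the sampling error and to land in a frame-operator setting), but you should make explicit that the starting identity being discretized is $f=\sum_k D_k\widetilde{D}_k f$ for a suitable companion family, or else the telescoping identity $\sum_k D_k=\mathrm{Id}$ alone will not match the form of $T$ you wrote down.
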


\begin{remark}
\label{remark:smoothness}
The kernels $\{S_{k}\}$ need to be such that for every $x \in \mathbb{R}^d$, $\sum_{(k,b)\in\Lambda} S_{k}(x,b)$ is sufficiently large. This is discussed in great generality in chapter 3 in \cite {deng2009harmonic}.
\end{remark}
 
\begin{remark}
\label{remark:duals}
The functions $\widetilde{\psi}_{k,b}$ are called dual elements, and are also a wavelet frame of $L_2(\mathbb{R}^d)$.
\end{remark}

%

\subsection{Approximation of functions with sparse wavelet coefficients}
\label{sec:barronSparse}
In this section we cite a result from \cite{barron2008approximation} regarding approximating functions which have sparse representation with respect to a dictionary $\mathcal{D}$ using finite linear combinations of dictionary elements.

Let $f$ a function in some Hilbert space $\mathcal{H}$ with inner product $\langle\cdot,\cdot \rangle$ and norm $\|\cdot \|$, and let $\mathcal{D} \subset \mathcal{H}$ be a dictionary, i.e., any family  of functions $(g)_{g\in \mathcal{D}}$ with unit norm. 
Assume that $f$ can be represented as a linear combination of elements in $\mathcal{D}$ with absolutely summable coefficients, and denote the sum of absolute values of the coefficients in the expansion of $f$ by $\|f\|_{\mathcal{L}_1}$.

In \cite{barron2008approximation}, it is shown that $\mathcal{L}_1$ functions can be approximated using $N$ dictionary terms with squared error proportional to $\frac{1}{\sqrt{N}}$. As a bonus, we also get a greedy algorithm (though not always practical) for selecting the corresponding dictionary terms. 
OGA is a greedy algorithm that at the $k$'th iteration computes the residual
\begin{equation}
r_{k-1} := f-f_{k-1},
\end{equation}
finds the dictionary element that is most correlated with it
\begin{equation}
g_k \in \arg\max_{g\in\mathcal{D}}|\langle r_{k-1},g \rangle |
\end{equation}
and defines a new approximation
\begin{equation}
f_k := P_kf,
\end{equation}
where $P_k$ is the orthogonal projection operator onto $\text{span}\{g_1,...,g_k\}$. 

\begin{theorem}(Theorem 2.1 from \cite{barron2008approximation})
\label{thm:BarronGreedy}
The error $r_N$ of the OGA satisfies
\begin{equation}
\|f-f_N \| \le \|f\|_{\mathcal{L}_1}(N+1)^{-1/2}.
\end{equation}
\end{theorem}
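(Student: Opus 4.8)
The plan is to run an induction on the squared residual norm $a_k := \|r_k\|^2$, establishing the one-step recursion $a_k \le a_{k-1}\bigl(1 - a_{k-1}/\|f\|_{\mathcal{L}_1}^2\bigr)$, from which the stated rate follows by a telescoping argument. Writing $B := \|f\|_{\mathcal{L}_1}$ and taking $f_0 = 0$ (so $r_0 = f$), I would first record the two structural facts I need. Since $f_k = P_k f$ is the best approximation to $f$ from $\text{span}\{g_1,\dots,g_k\}$, the residual $r_k$ is orthogonal to that span, in particular to $f_k$ and to each $g_i$ with $i\le k$. Moreover, writing $f = \sum_j c_j g_j$ with $\sum_j |c_j| \le B$ and each $\|g_j\|=1$, the triangle inequality gives $\|f\| \le \sum_j |c_j| \le B$, which will supply the base case.

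The first key step is a correlation lower bound. Because $r_{k-1}\perp f_{k-1}$, I have $\langle r_{k-1}, f\rangle = \langle r_{k-1}, f - f_{k-1}\rangle = \|r_{k-1}\|^2$. Expanding $f$ in the dictionary and invoking the greedy choice of $g_k$ as the maximizer of $|\langle r_{k-1}, g\rangle|$ over $g\in\mathcal{D}$ yields
\[
\|r_{k-1}\|^2 = \langle r_{k-1}, f\rangle = \sum_j c_j\langle r_{k-1}, g_j\rangle \le \Bigl(\sum_j |c_j|\Bigr)\max_{g\in\mathcal{D}}|\langle r_{k-1}, g\rangle| = B\,|\langle r_{k-1}, g_k\rangle|,
\]
so that $|\langle r_{k-1}, g_k\rangle| \ge \|r_{k-1}\|^2 / B$. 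This is the step where the $\ell_1$ (dictionary) structure of $f$ enters, and I expect it to be the main conceptual hurdle.

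The second step quantifies the error decrease. Since $f_k$ is the optimal projection onto the enlarged span, $f_{k-1} + \langle r_{k-1}, g_k\rangle g_k$ is a competing element of that span, hence
\[
\|r_k\|^2 \le \bigl\|r_{k-1} - \langle r_{k-1}, g_k\rangle g_k\bigr\|^2 = \|r_{k-1}\|^2 - |\langle r_{k-1}, g_k\rangle|^2,
\]
using $\|g_k\|=1$. Combining with the correlation bound gives $a_k \le a_{k-1} - a_{k-1}^2/B^2 = a_{k-1}\bigl(1 - a_{k-1}/B^2\bigr)$.

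Finally I would solve this recursion. Setting $b_k := a_k/B^2 \in [0,1]$ (with $b_0 \le 1$ from $\|f\|\le B$), the recursion reads $b_k \le b_{k-1}(1 - b_{k-1})$. If some $b_{k-1}$ vanishes the residual is already zero and the bound holds trivially thereafter; otherwise, using $\tfrac{1}{1-x}\ge 1+x$ for $x\in[0,1)$, I get
\[
\frac{1}{b_k} \ge \frac{1}{b_{k-1}(1-b_{k-1})} \ge \frac{1}{b_{k-1}}(1+b_{k-1}) = \frac{1}{b_{k-1}} + 1,
\]
and telescoping yields $\tfrac{1}{b_N} \ge \tfrac{1}{b_0} + N \ge N+1$. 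Hence $a_N \le B^2/(N+1)$, i.e.\ $\|f - f_N\| = \sqrt{a_N} \le \|f\|_{\mathcal{L}_1}(N+1)^{-1/2}$, as claimed. The only care needed is the edge case $b_{k-1}=1$, where $b_k\le 0$ forces $b_k=0$ and the iteration terminates with exact recovery.
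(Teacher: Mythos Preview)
The paper does not supply its own proof of this statement; it is quoted verbatim as Theorem~2.1 from \cite{barron2008approximation} and used as a black box. So there is nothing in the paper to compare against.

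That said, your argument is correct and is essentially the standard proof from the cited reference. The three ingredients you isolate---the correlation lower bound $|\langle r_{k-1},g_k\rangle|\ge \|r_{k-1}\|^2/B$ obtained by expanding $f$ in the dictionary and using orthogonality of $r_{k-1}$ to $f_{k-1}$, the one-step error decrease $\|r_k\|^2\le \|r_{k-1}\|^2-|\langle r_{k-1},g_k\rangle|^2$ coming from optimality of the orthogonal projection, and the elementary recursion $b_k\le b_{k-1}(1-b_{k-1})\Rightarrow 1/b_k\ge 1/b_{k-1}+1$---are exactly the steps in the original proof. One small technical remark: the quantity $\|f\|_{\mathcal L_1}$ is usually defined as an infimum over all admissible dictionary expansions, so strictly speaking you should fix an expansion with $\sum_j|c_j|\le B+\varepsilon$, run your argument, and let $\varepsilon\downarrow 0$ at the end; this changes nothing substantive.
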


Clearly, for $\mathcal{H} = L_2(\mathbb{R}^d)$ we can choose the dictionary to be the wavelet frame given by  
\begin{equation}
\mathcal{D} = \{\psi_{k,b}: (k,b) \in \mathcal{Z} \times \mathbb{R}^d, b\in 2^{-k}\mathbb{Z} \}.
\end{equation}

\begin{remark} \label{remark:equivalentFrames}
Let $\mathcal{D} =\{\psi_{k,b} \}$ be a wavelet frame that satisfies the regularities in conditions $3.14-3.19$ in \cite{deng2009harmonic}. Then if a function $f$ is in $\mathcal{L}_1$ with respect to $\mathcal{D}$, it is also in  $\mathcal{L}_1$ with respect to any other wavelet frame that satisfies the same regularities. In other words, having expansion coefficients in $l_1$ does not depend on the specific choice of wavelets (as long as the regularities are satisfied). The idea behind the proof of this claim is explained in appendix~\ref{app:equivFrame}.  
\end{remark}

\begin{remark}
Section $4.5$ in \cite {deng2009harmonic} gives a way to check whether a function $f$ has sparse coefficients without actually calculating the coefficients: 
\begin{equation}
f \in \mathcal{L}_1 \text{ iff } \sum_{k \in \mathbb{Z}} 2^{k/2} \|f * \psi_{k,0}\|_1<\infty,
\end{equation}
i.e., one can determine if $f \in \mathcal{L}_1 $ without explicitly computing its wavelet coefficients; rather, by convolving $f$ with non-shifted wavelet terms in all scales.
\end{remark}


\section{Approximating functions on manifolds using deep neural nets}
\label{sec:Main}
In this section we describe in detail the steps in our construction of deep networks, which are designed to approximate functions on smooth manifolds. The main steps in our construction are the following:
\begin{enumerate}
\item We construct a frame of $L_2(\mathbb{R}^d)$ in which the frame elements can be constructed from rectified linear units (see Section \ref{sec:wavConstruction}).
 \item Given a $d$-dimensional manifold $\Gamma \subset \mathbb{R}^m$, we construct an atlas for $\Gamma$ by covering it with open balls (see Section \ref{sec:creatingAtlas}).
\item We use the open cover to obtain a partition of unity of $\Gamma$ and consequently represent any function on $\Gamma$ as a sum of functions on $\mathbb{R}^d$ (see section \ref{sec:sumOfFunctions}).
\item We show how to extend the wavelet terms in the wavelet expansion, which are defined on $\mathbb{R}^d$, to $\mathbb{R}^m$ in a way that depends on the curvature of the manifold $\Gamma$ (see Section \ref{sec:extension}). 
\end{enumerate}

\subsection{Constructing a wavelet frame from rectifier units}
\label{sec:wavConstruction}
In this section we show how Rectified Linear Units (ReLU) can be used to obtain a wavelet frame of $L_2(\mathbb{R}^d)$. The construction of wavelets from rectifiers is fairly simple, and we refer to results from Section~\ref{sec:Harmonic} to show that they obtain a frame of $L_2(\mathbb{R}^d)$.

The rectifier activation function is defined on $\mathbb{R}$ as
\begin{equation}
\rect(x) = \max\{0,x\}.
\end{equation}
we define a trapezoid-shaped function $t:\mathbb{R} \rightarrow\mathbb{R}$ by
\begin{equation}
t(x) = \rect(x+3) - \rect(x+1) - \rect(x-1) + \rect(x-3).
\end{equation}

We then define the scaling function $\varphi:\mathbb{R}^d \rightarrow \mathbb{R}$ by 
\begin{equation}
\varphi(x) = C_d\rect\left(\sum_{j=1}^d t(x_j) -2(d-1)\right), \label{eq:father}
\end{equation}
where the constant $C_d$ is such that  
\begin{equation}
\int_{\mathbb{R}^d} \varphi(x)dx=1;
\end{equation}
for example, $C_1=\frac{1}{8}$.
Following the construction in Section \ref{sec:Harmonic}, we define
\begin{equation}
S_k(x,b) = 2^k\varphi(2^{\frac{k}{d}}(x-b)) \label{eq:Sdef}
\end{equation}
\begin{lemma}
\label{lemma:conditions}
The family $\{S_k \}$ is a family of averaging kernels.
\end{lemma}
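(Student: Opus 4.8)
The plan is to check, condition by condition, that $S_k(x,b)=2^k\varphi(2^{k/d}(x-b))$ satisfies the requirements $3.14$--$3.18$ and $3.19$ (with $\sigma=\epsilon$) of Definition~\ref{def:fatherMother}. First I would extract the three structural properties of $\varphi$ that drive everything: (i) $\varphi$ has compact support; (ii) $\varphi$ is continuous and piecewise linear, hence globally Lipschitz with some constant $L$; and (iii) $\varphi\ge 0$ and $\int_{\mathbb{R}^d}\varphi=1$. Here (iii) is exactly the defining property of $C_d$. For (i) I would record the shape of the scalar trapezoid $t$, namely $t\equiv 0$ outside $[-3,3]$, $t\equiv 2$ on $[-1,1]$ and $0\le t\le 2$ everywhere; since $\varphi(x)>0$ forces $\sum_{j=1}^d t(x_j)>2(d-1)$ while each summand is at most $2$, every $x_j$ must satisfy $|x_j|\le 3$, so $\supp(\varphi)\subseteq[-3,3]^d$. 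Property (ii) holds because $\varphi=C_d\,\rect(\cdot)$ applied to an affine combination of the piecewise-linear maps $x\mapsto t(x_j)$. Writing $r_k:=2^{-k/d}$, the family $\{S_k\}$ is then a translation-invariant (convolution) kernel of width $r_k$ and height $r_k^{-d}=2^k$, which is the correct normalization for an averaging kernel at length scale $r_k$ (equivalently, measure scale $r_k^d=2^{-k}$).

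Given (i)--(iii), most of the conditions are immediate. The size bound $3.14$, which asks for an estimate of the form $|S_k(x,b)|\lesssim r_k^{-d}(1+d(x,b)/r_k)^{-(d+\epsilon)}$, is trivial: by (i) the kernel vanishes once $d(x,b)\gtrsim r_k$ and is otherwise bounded by $r_k^{-d}C_d\|\varphi\|_\infty$, so compact support dominates any prescribed polynomial decay. The normalization identities $3.17$ and $3.18$, i.e. $\int S_k(x,b)\,db=\int S_k(x,b)\,dx=1$, follow from (iii) via the substitution $z=2^{k/d}(x-b)$, whose Jacobian factor $2^{-k}$ cancels the prefactor $2^k$. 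For the first-order smoothness conditions $3.15$ and $3.16$, which demand Hölder control such as $|S_k(x,b)-S_k(x',b)|\lesssim r_k^{-d}(|x-x'|/r_k)^{\sigma}$, I would use (ii) directly: $|S_k(x,b)-S_k(x',b)|\le 2^k L\,2^{k/d}|x-x'|=r_k^{-d}L\,(|x-x'|/r_k)$, the $\sigma=1$ estimate. Since a Lipschitz function is automatically Hölder-$\sigma$ for every $\sigma\in(0,1]$ over the bounded range of displacements that matters here, this yields the claim for any chosen exponent $\sigma=\epsilon$.

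The only condition requiring real care, and the one I expect to be the main obstacle, is the second-difference (double-Hölder) condition $3.19$ with $\sigma=\epsilon$, because $\varphi$ has corners and hence no bounded second derivative. Here I would exploit the translation-invariant structure. Setting $g:=2^k\varphi(2^{k/d}\cdot)$, $u=x-y$, $p=x'-x$ and $q=y'-y$, the mixed difference collapses to
\begin{equation}
S_k(x,y)-S_k(x',y)-S_k(x,y')+S_k(x',y')=\bigl[g(u)-g(u-q)\bigr]-\bigl[g(u+p)-g(u+p-q)\bigr],
\end{equation}
a first difference in $p$ of a first difference in $q$. Applying the Lipschitz bound in whichever slot is shorter gives $|D|\le 2\,\mathrm{Lip}(g)\,\min(|p|,|q|)=C\,r_k^{-(d+1)}\min(|p|,|q|)$, whereas the required estimate is $|D|\lesssim r_k^{-d}(|p|/r_k)^{\sigma}(|q|/r_k)^{\sigma}$. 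Comparing the two (say $|p|\le|q|$) reduces the claim to $(|q|/r_k)^{1-2\sigma}\le C$. This is exactly where taking $\epsilon=\sigma<\tfrac12$ is essential: condition $3.19$ need only be verified for displacements small relative to the scale $2^{-k}+d(x,y)$, and since compact support confines the relevant configurations to $d(x,y)\lesssim r_k$, one has $|q|\lesssim r_k$ throughout, so the reduced inequality holds. Thus Lipschitz regularity together with a sufficiently small smoothness exponent also settles the second-order condition, completing the verification that $\{S_k\}$ is a family of averaging kernels.
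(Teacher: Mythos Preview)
Your strategy and the paper's agree for conditions 3.14--3.18: both exploit that $\varphi$ is compactly supported, Lipschitz (piecewise linear), nonnegative, and has integral $1$; you phrase the bounds with the Euclidean scale $r_k=2^{-k/d}$ while the paper uses the volume quasi-metric $\rho(x,b)=c\|x-b\|^d$, but the two match after reparametrizing exponents (your H\"older exponent $\sigma$ corresponds to the paper's $d\sigma$, whence the paper's choice $\sigma=1/d$ for 3.15--3.16). The genuine difference is condition 3.19. The paper applies the mean value theorem twice, reaching $\|\nabla^2_{x,b}S_k\|$, and then asserts the existence of a compactly supported majorant $\xi$; this step is at best heuristic, since $\varphi$ has corners and its Hessian is only a measure, so with $\sigma=1/d$ the normalized second difference actually blows up near a corner (e.g.\ in $d=1$ take $x=1$, $b=0$, $x'=1+h$, $b'=h$: the quotient behaves like $1/h$). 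You correctly anticipate this obstruction and take a cleaner route: a single Lipschitz bound gives $|D|\lesssim r_k^{-(d+1)}\min(|p|,|q|)$, and then you invoke the displacement restriction in the hypothesis of 3.19---which the paper explicitly discards, proving ``for all $x,x',b,b'$''---together with compact support to force $|p|,|q|\lesssim r_k$ in the nontrivial regime, yielding the product estimate for any $\sigma\le\tfrac12$. Your argument is thus more elementary and more robust; the one point worth tightening is the clause ``compact support confines the relevant configurations to $d(x,y)\lesssim r_k$'': make explicit that if all four kernel values vanish the bound is trivial, and that otherwise a short quasi-triangle-inequality argument using the constants in the 3.19 hypothesis forces $\rho(x,b)\lesssim 2^{-k}$, hence $|p|,|q|\lesssim r_k$.
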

The proof is given in Appendix \ref{app:conditions}. Next we define the (``mother'') wavelet as
\begin{equation}
D_k(x,b) = S_k(x,b) - S_{k-1}(x,b),
\end{equation}
And denote
\begin{equation}
\psi_{k,b}(x) \equiv 2^{-\frac{k}{2}}D_k(x,b),
\end{equation} 
and
\begin{align}
\psi(x) &\equiv \psi_{0,0}(x)\\
& = D_0(x,0)\\
& = S_0(x,0) - S_{-1}(x,0)\\
& = \varphi(x) - 2^{-1}\varphi(2^{-\frac{1}{d}}x)).
\end{align}
Figure \ref{fig:mother} shows the construction of $\varphi$ and $\psi$ in for $d=1,2$.
\begin{figure}[ht!]
  \centering
     \includegraphics[width=2in,height=2in]{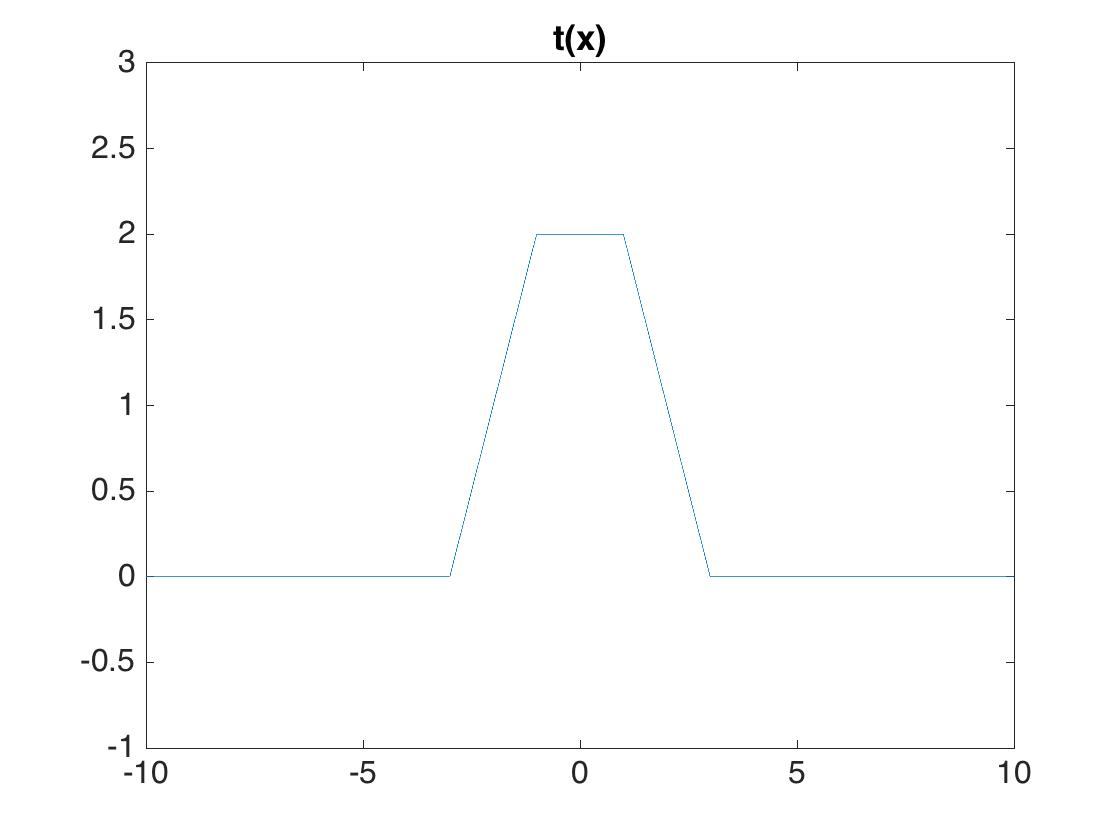}
     \includegraphics[width=2in,height=2in]{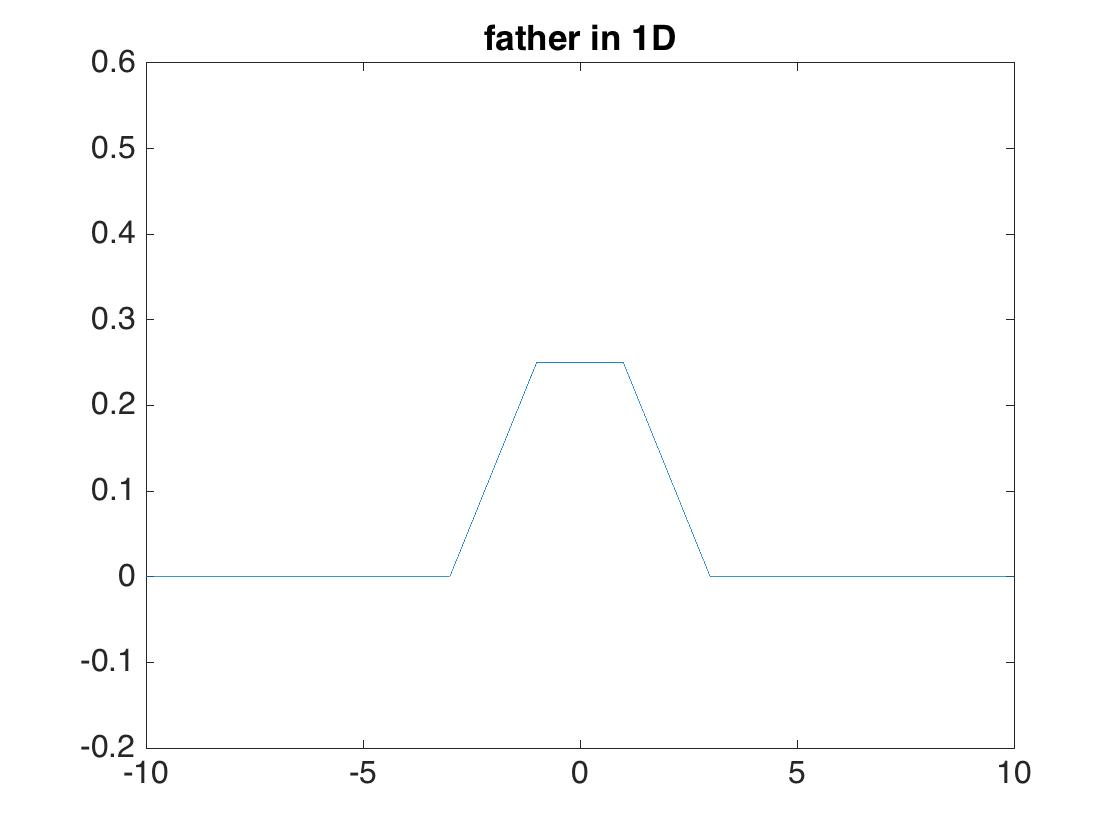}
     \includegraphics[width=2in,height=2in]{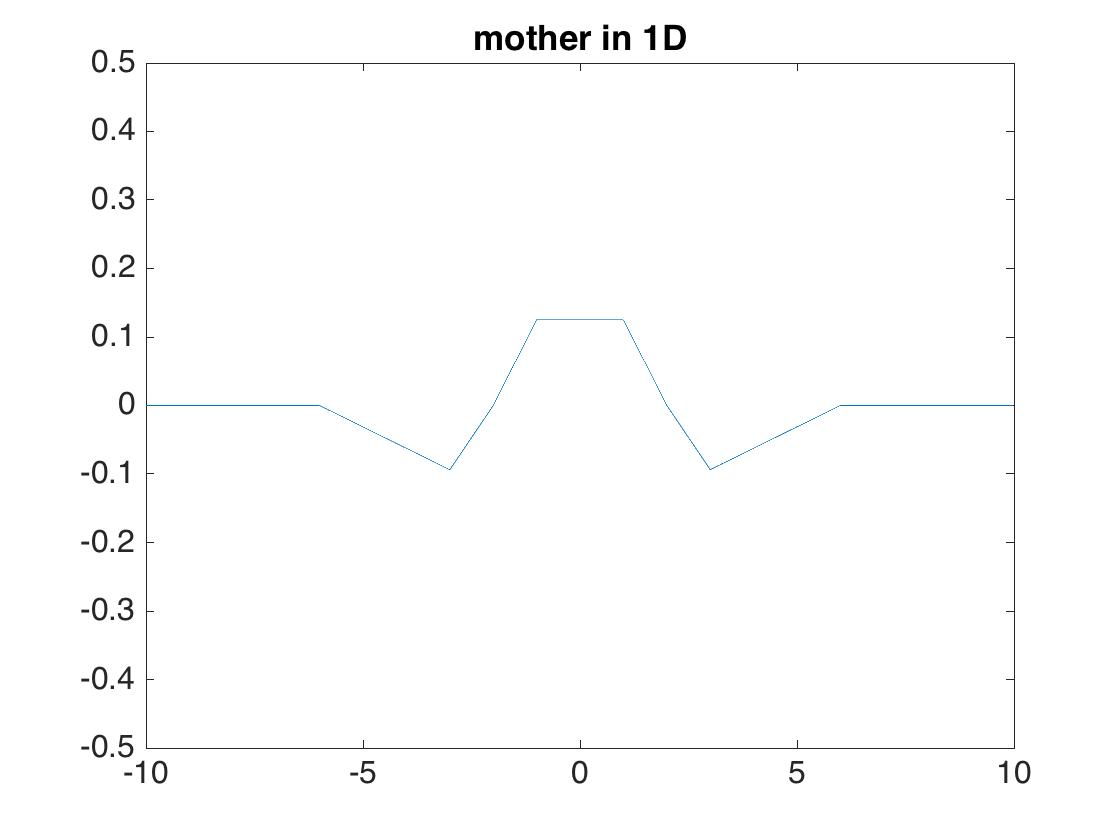}
     \includegraphics[width=2in,height=2in]{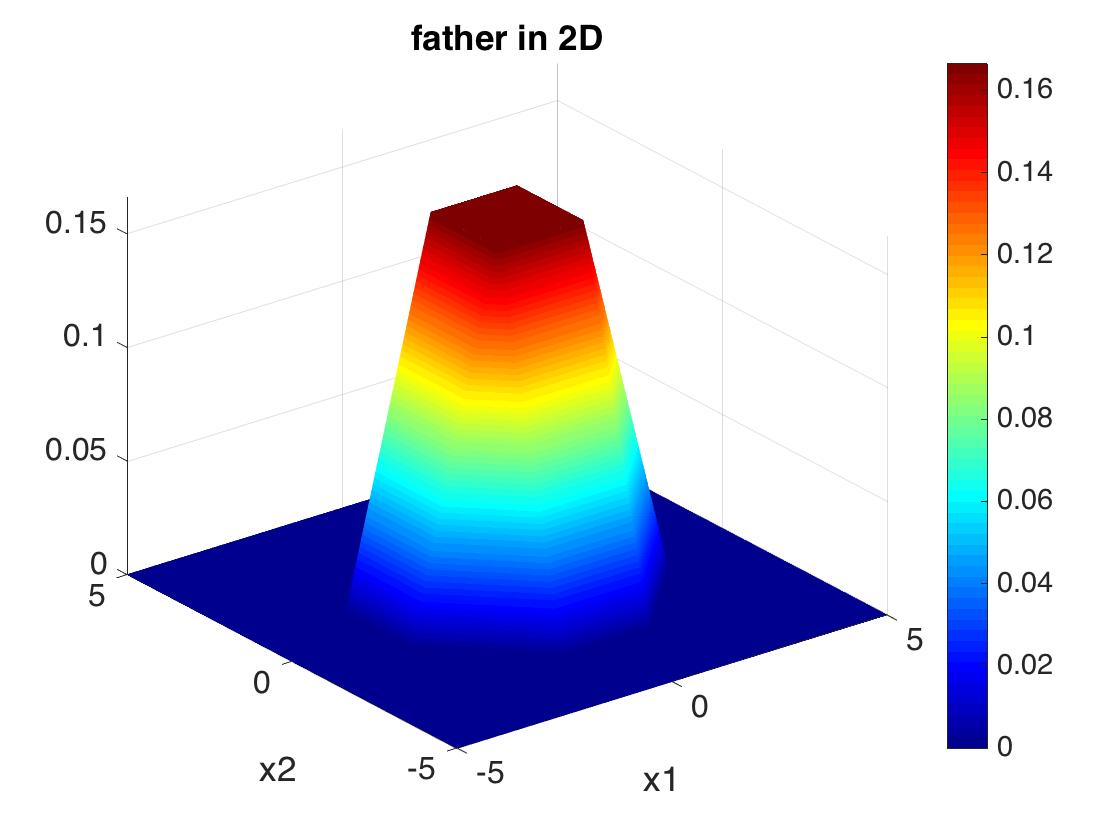}
     \includegraphics[width=2in,height=2in]{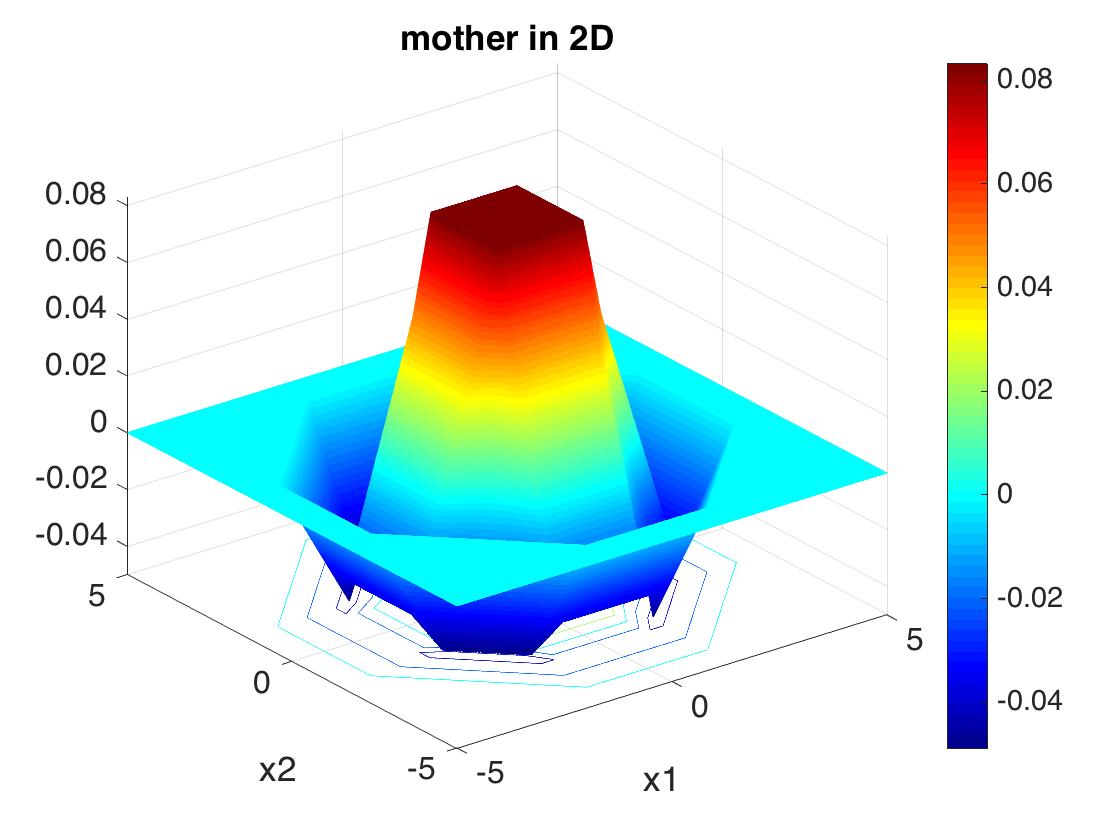}
     \includegraphics[width=2in,height=2in]{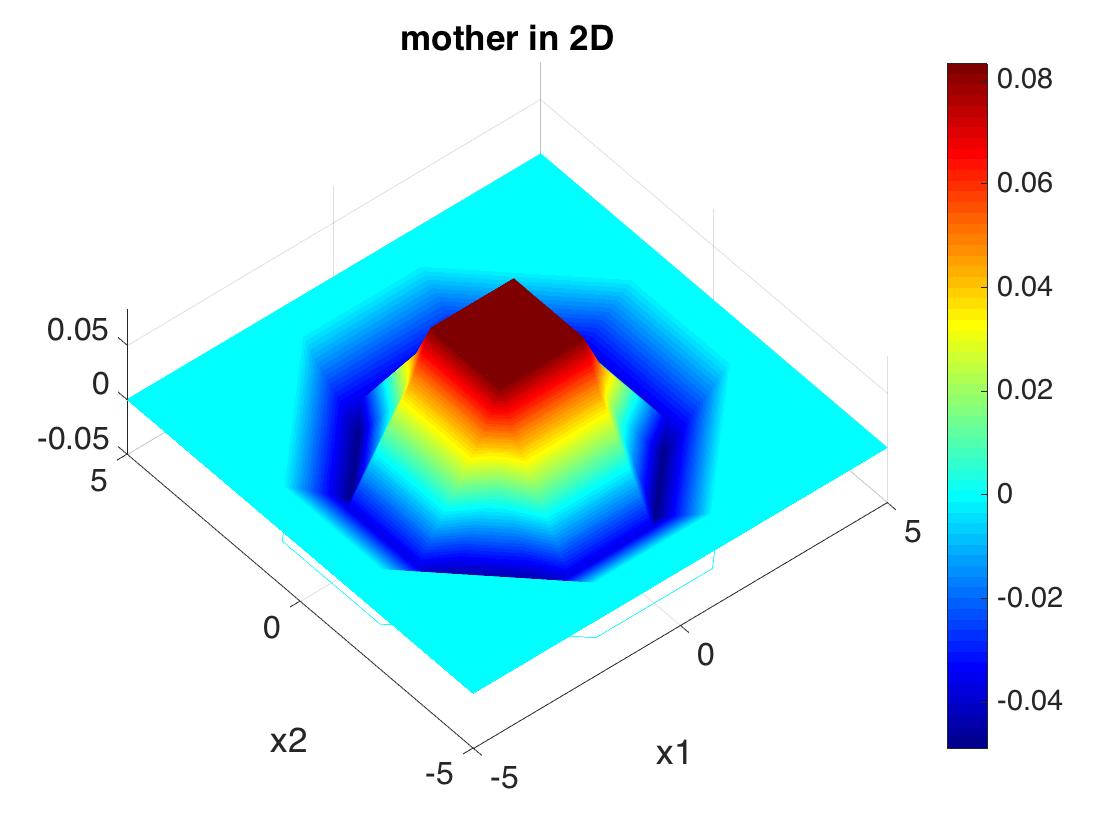}
     \includegraphics[width=2in,height=2in]{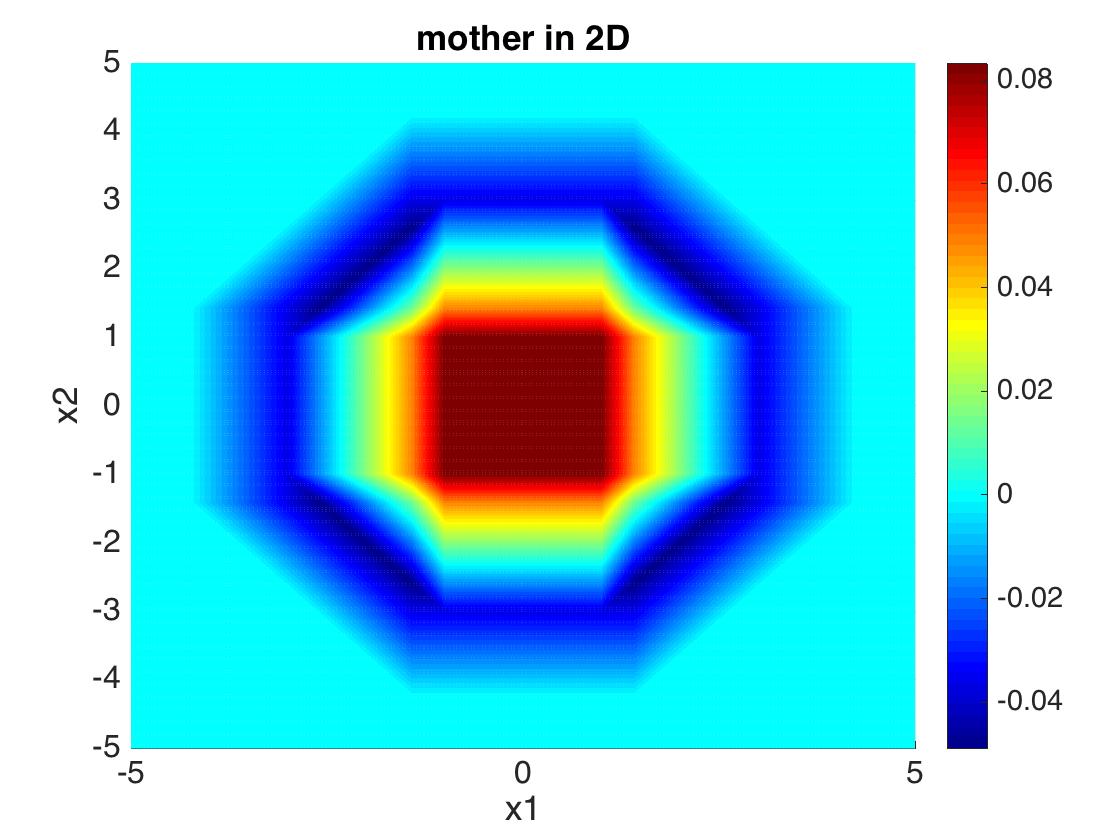}
   \caption{Top row, from left: the trapezoid function $t$, and the functions $\varphi,\psi$ on $\mathbb{R}$. Bottom rows: the functions $\varphi,\psi$ on $\mathbb{R}^2$ from several points of view.}
  \label{fig:mother}
\end{figure}

\begin{remark}
\label{remark:motherFather}
We can see that
\begin{align}
\psi_{k,b}(x) &= 2^{-\frac{k}{2}}D_k(x,b)\\
& = 2^{-\frac{k}{2}}(S_k(x,b) - S_{k-1}(x,b)) \\
& = 2^{-\frac{k}{2}}(2^k\varphi(2^\frac{k}{d}(x-b)) - 2^{k-1}\varphi(2^\frac{k-1}{d}(x-b)))\\
& = 2^\frac{k}{2}\left(\varphi(2^\frac{k}{d}(x-b)) -  2^{-1}\varphi(2^\frac{k-1}{d}(x-b))\right)\\
& = 2^\frac{k}{2}\psi\left(2^\frac{k}{d}(x-b)\right). \label{eq:motherFromFather}
\end{align}
\end{remark}

\begin{remark}
\label{remark:NumUnitsForMother}
With the above construction, $\varphi$ can be computed using a network with $4d$ rectifier units in the first layer and a single unit in the second layer. Hence every wavelet term $\psi_{k,b}$ can be computed using $8d$ rectifier units in the first layer, 2 rectifier units in the second layer and a single linear unit in the third layer. From this, the sum of $k$ wavelet terms can be computed using a network with $8dk$ rectifiers in the first layer, $2k$ rectifiers in the second layer and a single linear unit in the third layer.
\end{remark}

From Theorem \ref{thm:3.25} and the above construction we then get the following lemma:
 \begin{lemma}
 \label{lemma:frame}
 $\{\psi_{k,b} : k \in \mathbb{Z}, b \in 2^{-k}\mathbb{Z}\}$ is a frame of $L_2(\mathbb{R}^d)$.
\end{lemma}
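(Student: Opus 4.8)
The plan is to treat Lemma \ref{lemma:frame} as a direct application of the abstract frame-construction machinery of Theorem \ref{thm:3.25}, with essentially all of the analytic work already absorbed into Lemma \ref{lemma:conditions}. First I would observe that the scaling functions defined in \eqref{eq:Sdef} from rectifier units are, by Lemma \ref{lemma:conditions}, a genuine family of averaging kernels in the sense of Definition \ref{def:fatherMother}. Next, the mother wavelets $D_k$ and the normalized terms $\psi_{k,b}$ used in the lemma are defined by exactly the formulas \eqref{eq:motherD} and \eqref{eq:mother} appearing in the hypotheses of Theorem \ref{thm:3.25}. Hence the hypotheses of that theorem are met verbatim, and I may invoke its conclusion.

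Applying Theorem \ref{thm:3.25} then yields dual families $\{\psi_{k,b}\}$ and $\{\widetilde\psi_{k,b}\}$ together with the reconstruction identity $f = \sum_{(k,b)\in\Lambda} \langle f, \widetilde\psi_{k,b}\rangle \psi_{k,b}$ for every $f \in L_2(\mathbb{R}^d)$, where $\Lambda = \{(k,b): b \in 2^{-k/d}\mathbb{Z}^d\}$. I would note that the indexing $b \in 2^{-k}\mathbb{Z}$ written in the lemma statement is simply the $d=1$ specialization of this lattice, so the two descriptions agree and for general $d$ one reads $\Lambda$ as above. It then remains only to upgrade this reproducing formula to the two-sided frame inequality $A\|f\|^2 \le \sum_{(k,b)} |\langle f, \psi_{k,b}\rangle|^2 \le B\|f\|^2$.

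For the upper (Bessel) bound I would use the regularity of the kernels --- the decay, smoothness and cancellation inherited from conditions $3.14$--$3.19$ of \cite{deng2009harmonic} --- which is precisely what guarantees that $\{\psi_{k,b}\}$ is a Bessel system; the same regularity, via Remark \ref{remark:duals}, makes $\{\widetilde\psi_{k,b}\}$ Bessel with some bound $\widetilde B$. The lower bound then comes for free from the reconstruction identity by a Cauchy--Schwarz/duality step: for any $f$,
\begin{equation}
\|f\|^2 = \sum_{(k,b)} \langle f, \widetilde\psi_{k,b}\rangle \langle \psi_{k,b}, f\rangle \le \Big(\sum_{(k,b)} |\langle f, \widetilde\psi_{k,b}\rangle|^2\Big)^{1/2} \Big(\sum_{(k,b)} |\langle f, \psi_{k,b}\rangle|^2\Big)^{1/2} \le \sqrt{\widetilde B}\,\|f\|\,\Big(\sum_{(k,b)} |\langle f, \psi_{k,b}\rangle|^2\Big)^{1/2},
\end{equation}
which rearranges to the lower frame bound $A = 1/\widetilde B$.

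The step I expect to be the real obstacle --- and the only place requiring genuine care rather than bookkeeping --- is establishing the Bessel bounds, since the simplified version of Theorem \ref{thm:3.25} quoted here states only the reconstruction formula and suppresses the quantitative frame constants. The honest options are either to invoke the full statement of Theorem $3.25$ in \cite{deng2009harmonic}, which furnishes these bounds directly as part of the Calder\'on-type reproducing construction, or to derive the upper bound by hand from almost-orthogonality (Schur-test) estimates on $\langle \psi_{k,b}, \psi_{k',b'}\rangle$ that follow from the size, regularity and cancellation of the rectifier-built wavelets. I would take the former route, so that the proof of Lemma \ref{lemma:frame} reduces to the two verifications above: that our $S_k$ are averaging kernels and that our $\psi_{k,b}$ are the associated wavelets.
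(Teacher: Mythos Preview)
Your approach is correct and is essentially the paper's own: the paper does not give a separate proof of Lemma \ref{lemma:frame} at all, but simply records it as the immediate consequence of Theorem \ref{thm:3.25} once Lemma \ref{lemma:conditions} has verified that the rectifier-built $\{S_k\}$ are averaging kernels. Your write-up is in fact more scrupulous than the paper, since you flag that the \emph{simplified} statement of Theorem \ref{thm:3.25} only asserts the reconstruction identity and you then explain how the two-sided frame bounds follow (either from the full theorem in \cite{deng2009harmonic} or from the Bessel-plus-duality argument you sketch); the paper leaves this implicit.
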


Next, the following lemma uses properties of the 
above frame to obtain point-wise error bounds in approximation of compactly supported functions $f \in C^2$.

\begin{lemma}
\label{lemma:c2Approx}
Let $f \in \L_2(\mathbb{R}^d)$ be compactly supported, twice differentiable and let $\|\nabla^2_f \|_{op}$ be bounded. Then for every $k \in \mathbb{N} \cup \{0 \}$ there exists a combination $f_K$ of terms up to scale $K$ so that for every $x \in \mathbb{R}^d$
 \begin{equation}
 |f(x) - f_K(x)| = O\left(2^{-\frac{2K}{d}}\right).
 \end{equation}
\end{lemma}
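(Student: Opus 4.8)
The plan is to approximate $f$ by its \emph{scaling approximation} at level $K$, i.e.\ by a linear combination of the father functions $S_K(\cdot,b)$ from \eqref{eq:Sdef}, and to show that for a $C^2$ function this already achieves the claimed rate. Concretely, I would take
\begin{equation}
f_K(x) = \sum_{b \in 2^{-K/d}\mathbb{Z}^d} 2^{-K} f(b)\, S_K(x,b) = \sum_{b} f(b)\,\varphi\!\left(2^{K/d}(x-b)\right),
\end{equation}
which is a \emph{finite} combination (since $f$ is compactly supported and $\varphi$ has compact support, only finitely many $b$ contribute) of scale-$K$ father functions $S_K(\cdot,b)$, hence a combination of terms up to scale $K$.

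The rate comes from the fact that the smoothing kernel $S_K$ has width $h:=2^{-K/d}$ and that $\varphi$ reproduces low-order polynomials. First I would record two reproduction properties of $\varphi$: the \emph{partition of unity} $\sum_b \varphi(2^{K/d}(x-b)) = 1$ (constant reproduction), which should follow from $\int_{\mathbb{R}^d}\varphi = 1$ together with the integer-shift periodization of $\varphi$ being constant; and the \emph{vanishing first moment} $\sum_b (b-x)\,\varphi(2^{K/d}(x-b)) = 0$ (linear reproduction), which I would deduce from the symmetry of $\varphi$ together with the vanishing of $\widehat{\varphi}$ and its gradient on $2\pi\mathbb{Z}^d\setminus\{0\}$. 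Granting these, a second-order Taylor expansion of $f$ about $x$,
\begin{equation}
f(b) - f(x) = \nabla f(x)\cdot(b-x) + \tfrac12 (b-x)^\top \nabla^2 f(\xi_b)(b-x),
\end{equation}
substituted into $f_K(x)-f(x) = \sum_b \bigl(f(b)-f(x)\bigr)\varphi(2^{K/d}(x-b))$, has its constant term killed by the partition of unity and its linear term killed by the vanishing first moment, leaving only the quadratic term. Because $\varphi$ is compactly supported, only points $b$ with $|x-b| \le C h$ contribute, so there $|b-x|^2 = O(h^2)$, the number of active terms is $O(1)$, and $\sum_b |\varphi(2^{K/d}(x-b))| = O(1)$. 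Bounding $|(b-x)^\top \nabla^2 f(\xi_b)(b-x)| \le \|\nabla^2 f\|_{op}\,|b-x|^2$ then gives $|f(x)-f_K(x)| = O(\|\nabla^2 f\|_{op}\,h^2) = O(2^{-2K/d})$.

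The main obstacle is precisely the two reproduction identities, i.e.\ controlling the effect of \emph{discretizing} the smoothing onto the lattice $2^{-K/d}\mathbb{Z}^d$. The continuous smoothing $\int S_K(x,y) f(y)\,dy = \int \varphi(u) f(x - h u)\,du$ is harmless: by $\int\varphi = 1$ and evenness of $\varphi$ (so $\int u\,\varphi(u)\,du = 0$ exactly) its error is the quadratic remainder $\tfrac12 h^2\int\varphi(u)\,u^\top\nabla^2 f(\xi)\,u\,du = O(h^2)$, using that $\varphi$ has finite second moment. The delicate point is the \emph{quadrature error} between this integral and the Riemann-type sum defining $f_K$; by Poisson summation it is governed by the decay of $\widehat{\varphi}$ on $2\pi\mathbb{Z}^d\setminus\{0\}$. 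In one dimension this is clean, since $\varphi = \tfrac18\,t$ with $t=\mathds{1}_{[-2,2]}*\mathds{1}_{[-1,1]}$, so $\widehat{\varphi}$ has a \emph{double} zero at each $2\pi n$, $n\neq 0$, yielding both the partition of unity and the vanishing first moment exactly. In higher dimensions $\varphi$ is not a tensor product, so I would either verify the analogous vanishing of $\widehat\varphi$ and $\nabla\widehat\varphi$ on the dual lattice directly, or settle for approximate reproduction accurate to order $h^2$, which is all the final bound requires; making this quantitative is the technical heart of the argument.
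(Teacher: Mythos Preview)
Your quasi-interpolation route is genuinely different from the paper's and is worth contrasting. The paper does \emph{not} approximate at a single scale; it truncates the frame expansion $f=\sum_{k,b}\langle f,\widetilde\psi_{k,b}\rangle\psi_{k,b}$ at scale $K$ and bounds the tail $\sum_{k>K}$ pointwise. The key estimate is the coefficient decay $|\langle f,\widetilde\psi_{k,b}\rangle|=O(2^{-(2k/d+k/2)})$, obtained by Taylor expanding $f$ about $b$ and using that the \emph{dual} wavelets $\widetilde\psi_{k,b}$ have two vanishing moments (proved as a separate proposition). Combined with $|\psi_{k,b}|\le 2^{k/2-2}$ and the observation that at most $12^d$ wavelets at each scale are supported at any given point, the tail sums geometrically to $O(2^{-2K/d})$. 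The Taylor step thus sits on the \emph{analysis} side (inside the coefficient), not on the synthesis side as in your argument.

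Your approach would be more elementary, and in $d=1$ it does go through: there $\varphi=\tfrac18\,t$ is a genuine B-spline and $\widehat\varphi$ has double zeros on $2\pi\mathbb{Z}\setminus\{0\}$, so both Strang--Fix conditions hold exactly. The gap is for $d\ge 2$: the specific $\varphi$ of~\eqref{eq:father} does \emph{not} satisfy even the partition of unity. In $d=2$, integrating out the second variable gives $g(x_1):=\int\varphi(x_1,x_2)\,dx_2 = 8C_2$ on $|x_1|\le 1$ and $C_2(3-|x_1|)(5-|x_1|)$ on $1\le|x_1|\le 3$, and a direct computation yields $\widehat\varphi(2\pi,0)=\widehat g(2\pi)=2C_2/\pi^2\neq 0$. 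Hence $\sum_{m\in\mathbb{Z}^2}\varphi(\cdot-m)$ is a \emph{nonconstant} $1$-periodic function. Your fallback (``approximate reproduction accurate to order $h^2$'') cannot rescue this: the constant-reproduction error is $f(x)\,\epsilon(2^{K/d}x)$ with $\epsilon$ a fixed nontrivial periodic function, which is $O(1)$ in $K$, not $O(h^2)$. Equivalently, the Poisson aliasing amplitudes $\widehat\varphi(2\pi n)$ are fixed nonzero numbers independent of $K$. So for this particular $\varphi$ the sampling scheme $f_K(x)=\sum_b f(b)\,\varphi(2^{K/d}(x-b))$ fails in $d\ge 2$, and a different choice of $f_K$---such as the paper's frame truncation, which supplies its own coefficients $\langle f,\widetilde\psi_{k,b}\rangle$---is needed.
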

The proof is given in Appendix \ref{app:c2Approx}.


\subsection{Creating an atlas}
\label {sec:creatingAtlas}
In this section we specify the number of charts that we would like to have to obtain an atlas for a compact $d$ -dimensional manifold $\Gamma \in \mathbb{R}^m$.

For our purpose here we are interested in a small atlas. We would like the size $C_\Gamma$ of such atlas to depend on the curvature of $\Gamma$: the lower the curvature is, the smaller is the number of charts we will need for $\Gamma$. 

Following the notation of Section \ref{sec:compactManifolds}, let $\delta>0$ so that for all $x\in\Gamma$, $B(x,\delta)\cap \Gamma$ is diffeomorphic to a disc, with a map that is close to the identity. We then cover $\Gamma$ with balls of radius $\frac{\delta}{2}$. The number of such balls that are required to cover $\Gamma$ is 
\begin{equation}
C_\Gamma \le \ceil*{\frac{2^d SA(\Gamma)}{\delta^d}T_d},
\end{equation}
where $SA(\Gamma)$ is the surface area of $\Gamma$, and $T_d$ is the thickness of the covering (which corresponds to by how much the balls need to overlap). 
\begin{remark}
The thickness $T_d$ scales with $d$ however rather slowly: by \cite {conway1993sphere}, there exist covering with $T_d \le d\log d + 5d$. For example, in $d=24$ there exist covering with thickness of $7.9$.
\end{remark}
A covering of $\Gamma$ by such a collection of balls defines an open cover of $\Gamma$ by 
 \begin{equation}
 U_i \equiv B\left(x_i, \delta\right) \cap \Gamma.
 \end{equation}
Let $H_i$ denote the tangent hyperplane tangent to $\Gamma$ at $x_i$. We can now define an atlas by $\{(U_i, \phi_i)\}_{i=1}^{C_\Gamma}$, where $\phi_i$ is the orthogonal projection from $U_i$ onto $H_i$.  

The above construction is sketched in Figure \ref{fig:manifold}.
\begin{figure}[ht!]
  \centering
    \includegraphics[width=3in,height=2in]{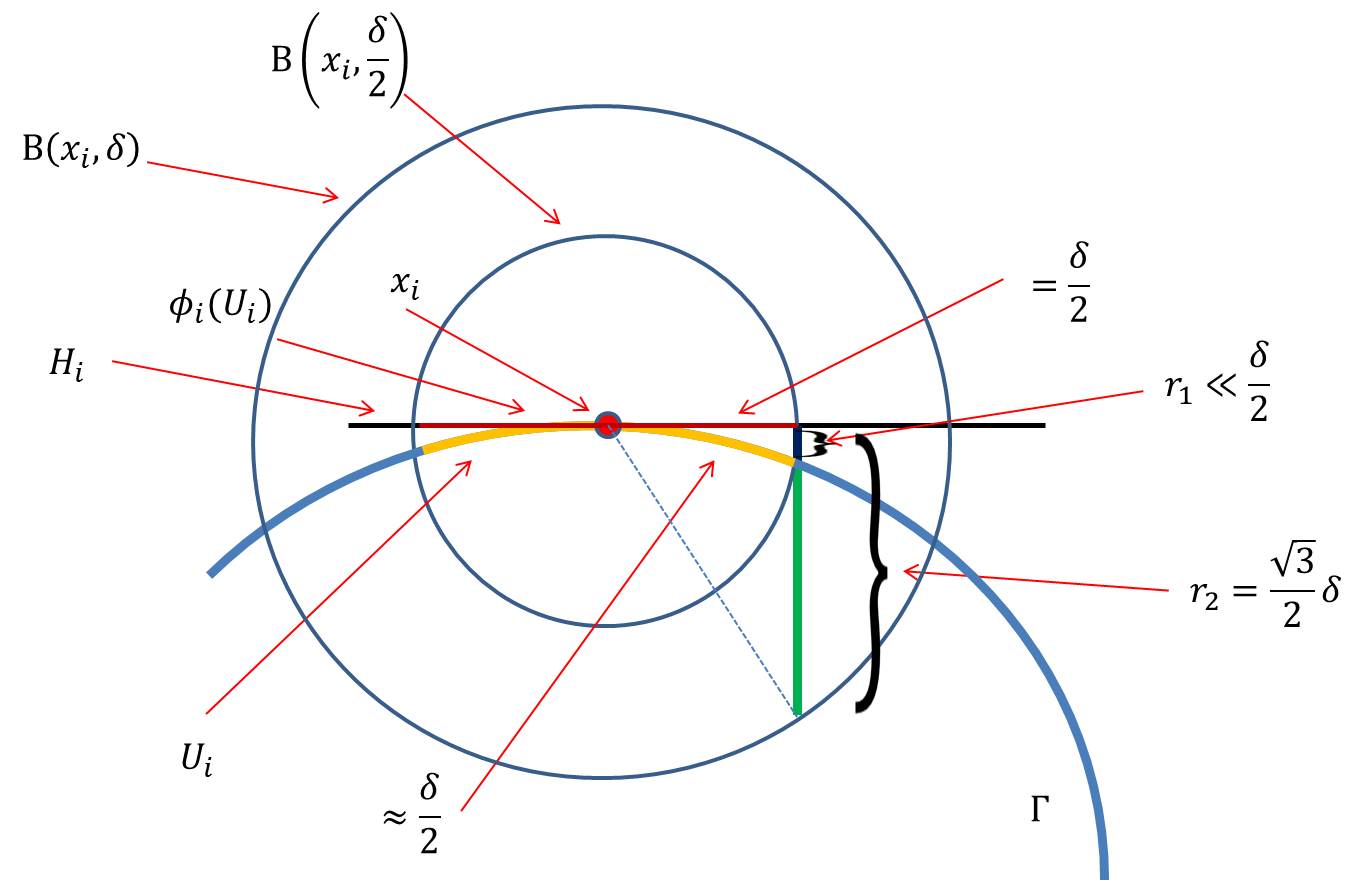}
   \caption{Construction of atlas.}
  \label{fig:manifold}
\end{figure}
Let $\tilde{\phi}_i$ be the extension of $\phi_i$ to $\mathbb{R}^m$, i.e., the orthogonal projection onto $H_i$. The above construction has two important properties, summarized in Lemma \ref{lemma:radii}
\begin{lemma}
\label{lemma:radii}
For every $x \in U_i$,
 \begin{equation}
 \|x - \phi_i(x) \|_2 \le r_1 \le \frac{\delta}{2}
 \end{equation}
and for every $x \in \Gamma\setminus  U_i$ such that $\tilde{\phi}_i(x) \in \phi_i(U_i)$
\begin{equation}
 \|x - \tilde{\phi}_i(x) \|_2 \ge r_2 = \frac{\sqrt{3}}{2}\delta.
 \end{equation}
\end{lemma}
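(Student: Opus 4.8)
The plan is to reduce both inequalities to the orthogonal (Pythagorean) decomposition of $x - x_i$ relative to the tangent hyperplane $H_i$. Since $H_i$ passes through $x_i$ and $\tilde{\phi}_i$ (which restricts to $\phi_i$ on $U_i$) is orthogonal projection onto $H_i$, the vectors $x - \tilde{\phi}_i(x)$ and $\tilde{\phi}_i(x) - x_i$ are orthogonal, so that
\begin{equation}
\|x - x_i\|_2^2 = \|x - \tilde{\phi}_i(x)\|_2^2 + \|\tilde{\phi}_i(x) - x_i\|_2^2 .
\end{equation}
Everything then follows by controlling the two legs of this right triangle using the covering radius and the flatness of $\Gamma$ inside a ball of radius $\delta$. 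The role of the hypothesis that $B(x_i,\delta)\cap\Gamma$ is diffeomorphic to a disc via a map close to the identity is to guarantee that $\phi_i$ is genuinely a chart on $U_i$, i.e. that $\phi_i|_{U_i}$ is injective and $\Gamma$ is a single graph of small height over $H_i$; this is what lets me treat $\phi_i(U_i)$ as a well-behaved disc in $H_i$.

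For the first inequality I would argue purely geometrically. Because $x_i \in H_i$ and $\phi_i(x)$ is the nearest point of $H_i$ to $x$, orthogonal projection through $x_i$ is distance-non-increasing, so $\|x - \phi_i(x)\|_2 \le \|x - x_i\|_2$. The centers $x_i$ were chosen so that the balls $B(x_i,\tfrac{\delta}{2})$ cover $\Gamma$, and combined with the flatness estimate this bounds the height of the graph of $\Gamma$ over $H_i$ by some curvature-dependent quantity $r_1$ with $r_1 \le \tfrac{\delta}{2}$, giving $\|x - \phi_i(x)\|_2 \le r_1 \le \tfrac{\delta}{2}$. The same covering/flatness reasoning bounds the in-plane extent of the chart, so that $\phi_i(U_i)$ is contained in a disc of tangential radius at most $\tfrac{\delta}{2}$ about $x_i$; this last bound is the fact I will feed into the second part.

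For the second inequality, fix $x \in \Gamma \setminus U_i$ with $\tilde{\phi}_i(x) \in \phi_i(U_i)$ and apply the displayed identity, lower-bounding the hypotenuse and upper-bounding the in-plane leg separately. Since $x$ lies outside the chart ball $U_i = B(x_i,\delta)\cap\Gamma$, it is separated from $x_i$ at the larger radius, $\|x - x_i\|_2 \ge \delta$; this is exactly where the doubling between the covering radius $\tfrac{\delta}{2}$ and the separation radius $\delta$ is used. On the other hand $\tilde{\phi}_i(x) = \phi_i(x')$ for some $x' \in U_i$, so by the tangential radius bound from the first part $\|\tilde{\phi}_i(x) - x_i\|_2 \le \tfrac{\delta}{2}$. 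Substituting,
\begin{equation}
\|x - \tilde{\phi}_i(x)\|_2^2 \ge \delta^2 - \left(\tfrac{\delta}{2}\right)^2 = \tfrac{3}{4}\delta^2 ,
\end{equation}
i.e. $\|x - \tilde{\phi}_i(x)\|_2 \ge \tfrac{\sqrt{3}}{2}\delta = r_2$. The two stated constants are consistent with this geometry, since $r_1^2 + r_2^2 = \delta^2$ describes precisely the right triangle with hypotenuse $\delta$.

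The step I expect to be the main obstacle is not the algebra but pinning down the two radii simultaneously and honestly. Part one wants points of $U_i$ to be close to $x_i$, so that both their heights and the in-plane extent of $\phi_i(U_i)$ stay at most $\tfrac{\delta}{2}$; part two wants the stray points $x \in \Gamma\setminus U_i$ that nonetheless project into $\phi_i(U_i)$ to be genuinely far, at ambient distance at least $\delta$. Reconciling these is where the diffeomorphism/flatness hypothesis must do real work: it ensures that over the disc $\phi_i(U_i) \subset H_i$ the manifold is a single low-height graph, so the only sheet contributing points with projection in $\phi_i(U_i)$ near $x_i$ is $U_i$ itself, and any \emph{other} point of $\Gamma$ with such a projection must originate from outside $B(x_i,\delta)$ and is therefore forced, via the identity above, to sit at perpendicular distance at least $\tfrac{\sqrt{3}}{2}\delta$. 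Making the passage from ``close to the identity'' to these explicit constants precise, and checking that it is uniform over the charts $i$, is the delicate part; the remainder is the elementary right-triangle computation.
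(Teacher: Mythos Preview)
The paper states this lemma without proof, so there is no argument to compare against directly. Your Pythagorean decomposition
\[
\|x-x_i\|_2^2=\|x-\tilde\phi_i(x)\|_2^2+\|\tilde\phi_i(x)-x_i\|_2^2
\]
is the natural route, and the arithmetic $\delta^2-(\delta/2)^2=3\delta^2/4$ is exactly what produces $r_2=\tfrac{\sqrt3}{2}\delta$.

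The genuine gap is your claim that $\phi_i(U_i)$ lies in a tangential disc of radius $\delta/2$ about $x_i$. With the paper's definition $U_i=B(x_i,\delta)\cap\Gamma$ this is false: the flatter $\Gamma$ is near $x_i$, the closer $\phi_i(U_i)$ is to the \emph{full} disc of radius $\delta$ in $H_i$. Flatness shrinks the height $r_1$, not the in-plane extent, so ``covering/flatness reasoning'' cannot deliver the $\delta/2$ tangential bound. With only $\|\tilde\phi_i(x)-x_i\|<\delta$ available, your Pythagorean step collapses to $\|x-\tilde\phi_i(x)\|_2^2\ge\delta^2-\delta^2=0$.

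The repair is to take the chart radius to be $\delta/2$ (the paper covers $\Gamma$ with balls of radius $\delta/2$, so this is the consistent reading; the displayed definition with radius $\delta$ is then a slip). With $U_i=B(x_i,\delta/2)\cap\Gamma$ both legs come for free: for $x'\in U_i$ one has $\|x'-x_i\|<\delta/2$, hence $\|x'-\phi_i(x')\|\le\delta/2$ and $\|\phi_i(x')-x_i\|\le\delta/2$. For the second inequality you now need $\|x-x_i\|\ge\delta$, not merely $\ge\delta/2$; \emph{this} is where the single-sheet hypothesis does its work. If $x\in\Gamma\setminus U_i$ has $\tilde\phi_i(x)=\phi_i(x')$ for some $x'\in U_i$, then $x\ne x'$, and since $\Gamma\cap B(x_i,\delta)$ is a graph over $H_i$ (the ``disc, close to identity'' assumption) it cannot contain two distinct points with the same projection. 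Hence $x\notin B(x_i,\delta)$, i.e.\ $\|x-x_i\|\ge\delta$, and the computation goes through. You correctly sensed that the graph property is essential, but you placed it at the wrong step: it is needed to upgrade the separation radius from $\delta/2$ to $\delta$, not to shrink the tangential footprint of the chart.
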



\subsection {Representing a function on manifold as a sum of functions in $\mathbb{R}^d$}
\label{sec:sumOfFunctions}
Let $\Gamma$ be a compact $d$-dimensional manifold in $\mathbb{R}^m$, let $f:\Gamma \rightarrow \mathbb{R}$, let $A =\{(U_i, \phi_i) \}_{i=1}^{C_\Gamma}$ be an atlas obtained by the covering in Section~\ref{sec:creatingAtlas}, and let $\tilde{\phi}_i$ be the extension of $\phi_i$ to $\mathbb{R}^m$. 

$\{U_i\}$ is an open cover of $\Gamma$, hence by Theorem \ref{thm:PartitionOfUnity} there exists a corresponding partition of unity, i.e., a family of compactly supported $C^\infty$ functions $\{\eta_i\}_{i=1}^{C_\Gamma}$ such that
\begin{itemize}
\item $\eta_i:\Gamma \rightarrow [0,1]$
\item $\supp(\eta_i) \subseteq(U_i)$
\item $\sum_i \eta_i=1$
\end{itemize}
 
Let $f_i$ be defined by
\begin{equation} \label{eq:fi}
f_i(x) \equiv f(x)\eta_i(x),
\end{equation}
and observe that $\sum_i f_i=f$. We denote the image  $\phi_i(U_i)$  by $I_i$. Note that  $ I_i \subset H_i$, i.e., $I_i$ lies in a $d$-dimensional hyperplane $H_i$ which is isomorphic to $\mathbb{R}^d$. We define $\hat{f}_i$ on $\mathbb{R}^d$ as
\begin{equation}
\label{eq:fHat}
\hat{f}_i(x) = \bigg\{\begin{tabular}{cc}
$f_i(\phi^{-1}(x))$ & $x \in I_i$\\
$0$ & otherwise
\end{tabular}
\end{equation}
and observe that $\hat{f}_i$ is compactly supported. This construction  gives the following Lemma
\begin{lemma}
\label{lemma:sumFi}
For all $x \in \Gamma$, 
\begin{equation}
\sum_{\{i:x \in U_i\}} \hat{f}_i(\phi_i(x)) = f(x).
\end{equation}
\end{lemma}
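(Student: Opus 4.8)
The plan is to reduce the stated identity to the defining property $\sum_i f_i = f$ of the partition of unity, by showing that each summand $\hat{f}_i(\phi_i(x))$ appearing in the sum equals $f_i(x)$, and that the omitted indices (those with $x \notin U_i$) would contribute nothing anyway.

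First I would fix an arbitrary $x \in \Gamma$ and consider any index $i$ in the summation, i.e.\ with $x \in U_i$. Since $\phi_i : U_i \to M$ is a homeomorphism, $\phi_i(x) \in \phi_i(U_i) = I_i$, so the first branch of the definition~\eqref{eq:fHat} applies and $\hat{f}_i(\phi_i(x)) = f_i(\phi_i^{-1}(\phi_i(x)))$. Because $\phi_i$ is injective on $U_i$, with inverse $\phi_i^{-1}$ defined on $I_i$, we have $\phi_i^{-1}(\phi_i(x)) = x$, and hence $\hat{f}_i(\phi_i(x)) = f_i(x)$. This rewrites the left-hand side as $\sum_{\{i:\,x \in U_i\}} f_i(x)$.

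Next I would extend the sum to run over all charts. For any index $i$ with $x \notin U_i$, the support condition $\supp(\eta_i) \subseteq U_i$ forces $x \notin \supp(\eta_i)$, so $\eta_i(x) = 0$ and therefore $f_i(x) = f(x)\eta_i(x) = 0$ by~\eqref{eq:fi}. Thus appending these vanishing terms leaves the value unchanged, giving $\sum_{\{i:\,x \in U_i\}} f_i(x) = \sum_{i=1}^{C_\Gamma} f_i(x)$. Invoking the partition-of-unity identity $\sum_i \eta_i(x) = 1$ then yields $\sum_i f_i(x) = f(x)\sum_i \eta_i(x) = f(x)$, completing the argument.

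I do not expect a genuine obstacle here; the statement is essentially a bookkeeping identity, and the main role of this lemma is to set up the subsequent construction rather than to present a difficult argument. The only two points requiring any care are (i) that $\phi_i$ is genuinely invertible on $U_i$, so that $\phi_i^{-1} \circ \phi_i$ acts as the identity there, which is guaranteed by the chart definition, and (ii) that restricting the sum to $\{i : x \in U_i\}$ discards only terms that vanish, so that it agrees with the full sum over which the partition of unity sums to one. Both follow immediately from the definitions, so the proof will be short.
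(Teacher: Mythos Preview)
Your argument is correct and matches the paper's treatment: the paper states the lemma as an immediate consequence of the construction (``This construction gives the following Lemma'') without writing out a proof, and your steps---unwinding~\eqref{eq:fHat} via $\phi_i^{-1}\circ\phi_i = \mathrm{id}$ on $U_i$, then extending the sum using $\supp(\eta_i)\subseteq U_i$ and $\sum_i\eta_i=1$---are exactly the bookkeeping that justifies it.
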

Assuming $\hat{f}_i \in L_2(\mathbb{R}^d)$, by Lemma \ref{lemma:frame} it has a wavelet expansion using the frame that was constructed in Section \ref{sec:wavConstruction}.


\subsection {Extending the wavelet terms in the approximation of $\hat{f}_i$ to $\mathbb{R}^m$}\label{sec:extension}
Assume that  $\hat{f}_i \in L_2(\mathbb{R}^d)$ and let
\begin{equation} \label{eq:fiWaveletApprox}
\hat{f_i} = \sum_{(k,b)}\alpha_{k,b}\psi_{k,b},
\end{equation}
be its wavelet expansion, where $\alpha_{k,b} \in \mathbb{R}$ and $\psi_{k,b}$ is defined on $\mathbb{R}^d$.

We now show how to extend each $\psi_{k,b}$ to $\mathbb{R}^m$. Let's assume (for now) that the coordinate system is such that the first $d$ coordinates are the local coordinates (i.e., the coordinates on $H_i$) and the remaining $m-d$ coordinates are of the directions which are orthogonal to $H_i$.

Intuitively, we would like to extend the wavelet terms on $H_i$ to $\mathbb{R}^m$ so that they remain constant until they "hit" the manifold, and then die off before they "hit" the manifold again.  By Lemma \ref{lemma:radii} it therefore suffices to extend each  $\psi_{k,b}$ to $\mathbb{R}^m$ so that in each of the $m-d$ orthogonal directions, $\psi_{k,b}$  will be constant in $[-\frac{r_1}{\sqrt{m-d}}, \frac{r_1}{\sqrt{m-d}}]$ and will have a support which is contained in $[-\frac{r_2}{\sqrt{m-d}}, \frac{r_2}{\sqrt{m-d}}]$.

Recall  from Remark \ref{remark:motherFather} that each of the wavelet terms $\psi_{k,b}$ in Equation~\eqref{eq:fiWaveletApprox} is defined on $\mathbb{R}^d$ by
\begin{align}
\psi_{k,b}(x) &= 2^\frac{k}{2}\left(\varphi(2^\frac{k}{d}(x-b))- 2^{-1}\varphi(2^\frac{k-1}{d}(x-b))\right)\\
\end{align}
and recall that as in Equation~\eqref{eq:father}, the scaling function $\varphi$ was defined on on $\mathbb{R}^d$ by 
\begin{equation}
\varphi(x) = C_d\rect\left(\sum_{j=1}^d t(x_j) -2(d-1)\right).
\end{equation}
We extend $\psi_{k,b}$ to $\mathbb{R}^m$ by
\begin{equation}\label{eq:extendedMother}
\psi_{k,b}(x) \equiv 2^\frac{k}{2}\left(\varphi_r(2^\frac{k}{d}(x-b))- 2^{-1}\varphi_r(2^\frac{k-1}{d}(x-b))\right),
\end{equation}
where
\begin{equation}\label{eq:extendedFather}
\varphi_r(2^\frac{k}{d}(x-b)) \equiv C_d\rect\left(\sum_{j=1}^d t(2^\frac{k}{d}(x_j-b_j))  +\sum_{j=d+1}^m t_r(x_j) -2(m-1)\right),
\end{equation}
and $t_r$ is a trapezoid function which is supported on $[-\frac{r_2}{\sqrt{m-d}}, \frac{r_2}{\sqrt{m-d}}]$ and its top (small) base is between $[-\frac{r_1}{\sqrt{m-d}}, \frac{r_1}{\sqrt{m-d}}]$ and has height 2. 
This definition of $\psi_{k,b}$ gives it a constant height for distance $r_1$ from $H_i$, and then a linear decay, until it vanishes at distance $r_2$.
%
Then by construction we obtain the following lemma
\begin{lemma}
\label{lemma:outsideSupport}
For every chart $(U_i, \phi_i)$ and every $x \in \Gamma \setminus U_i$ such that $\tilde{\phi}_i(x) \in \phi_i(U_i)$, $x$ is outside the support of every wavelet term corresponding to the $i$'th chart.
\end{lemma}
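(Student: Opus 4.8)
The plan is to work in the adapted coordinate system used in Section~\ref{sec:extension}, in which the first $d$ coordinates parametrize the tangent hyperplane $H_i$ and the remaining $m-d$ coordinates span its orthogonal complement. In these coordinates the orthogonal projection $\tilde{\phi}_i$ fixes the first $d$ coordinates of $x$ and zeroes out the last $m-d$, so the orthogonal distance from $x$ to $H_i$ is exactly $\|x-\tilde{\phi}_i(x)\|_2 = \sqrt{\sum_{j=d+1}^m x_j^2}$. The whole point is that the extended wavelet terms were designed to die off in these orthogonal directions before reaching this distance, so I would first translate the hypothesis into a lower bound on this quantity: since $x \in \Gamma \setminus U_i$ and $\tilde{\phi}_i(x) \in \phi_i(U_i)$, Lemma~\ref{lemma:radii} gives $\sqrt{\sum_{j=d+1}^m x_j^2} \ge r_2$.

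Next I would show that this distance bound forces at least one orthogonal coordinate outside the support of $t_r$. By contraposition of an elementary inequality, if every $|x_j| < r_2/\sqrt{m-d}$ for $j = d+1, \dots, m$ then $\sum_{j=d+1}^m x_j^2 < (m-d)\bigl(r_2/\sqrt{m-d}\bigr)^2 = r_2^2$; hence the bound $\sqrt{\sum_{j=d+1}^m x_j^2}\ge r_2$ guarantees some coordinate $x_{j_0}$ with $|x_{j_0}| \ge r_2/\sqrt{m-d}$. Since $t_r$ is supported on $[-r_2/\sqrt{m-d}, r_2/\sqrt{m-d}]$, this gives $t_r(x_{j_0}) = 0$. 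Now I plug this into the rectifier argument of $\varphi_r$ from \eqref{eq:extendedFather}. Using that the trapezoids $t$ and $t_r$ are bounded above by $2$ everywhere, the argument is at most $2d + 2(m-d-1) - 2(m-1) = 0$, so $\rect$ returns $0$ and $\varphi_r$ vanishes at $x$. Crucially, the orthogonal coordinates enter $\varphi_r$ without the scale-dependent dilation $2^{k/d}$, so exactly the same estimate applies to both terms $\varphi_r(2^{k/d}(x-b))$ and $\varphi_r(2^{(k-1)/d}(x-b))$ in \eqref{eq:extendedMother}; therefore $\psi_{k,b}(x) = 0$ for every $k$ and $b$.

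Finally, to upgrade pointwise vanishing to the claim that $x$ lies outside the support, I would observe that the computation above uses only the inequality $\sqrt{\sum_{j=d+1}^m y_j^2}\ge r_2$, so every wavelet term of the $i$'th chart vanishes on the whole closed slab $\{y : \sqrt{\sum_{j=d+1}^m y_j^2}\ge r_2\}$; consequently $\supp(\psi_{k,b})$ is contained in the open region where this orthogonal distance is strictly less than $r_2$. I expect the only real subtlety to be this last open-versus-closed bookkeeping at the exact distance $r_2$, together with the observation that the rectifier argument attains its maximum value of exactly $2$, so that losing a single trapezoid ($t_r(x_{j_0})=0$) drops it precisely to the vanishing threshold $0$. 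All the geometric content is carried by Lemma~\ref{lemma:radii}, and everything else is a direct substitution into \eqref{eq:extendedFather}.
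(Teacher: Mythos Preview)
Your proposal is correct and is exactly the argument the paper has in mind: in the text the lemma is asserted ``by construction'' immediately after the definition of the extended trapezoid $t_r$ and $\varphi_r$, with no separate proof, so what you have written is simply a careful unpacking of that construction together with the distance bound from Lemma~\ref{lemma:radii}. The pigeonhole step (some orthogonal coordinate must satisfy $|x_{j_0}|\ge r_2/\sqrt{m-d}$) and the rectifier-argument bound $2d+2(m-d-1)-2(m-1)=0$ are precisely the two observations the paper's choice of support $[-r_2/\sqrt{m-d},\,r_2/\sqrt{m-d}]$ and threshold $2(m-1)$ were engineered to make work.
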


\begin{remark}
\label{remark:extensionUnits}
Since the $m-d$ additional trapezoids in Equation~\eqref{eq:extendedFather} do not scale with $k$ and shift with $b$, they can be shared across all scaling terms in Equations~\eqref{eq:extendedMother} and~\eqref{eq:fiWaveletApprox}, so that the extension of the wavelet terms from $\mathbb{R}^d$ to $\mathbb{R}^m$ can be computed with $4(m-d)$ rectifiers.
\end{remark}

Finally, in order for this construction to work for all $i=1,...,C_\Gamma$ the input $x\in \mathbb{R}^m$ of the network can be first mapped to $\mathbb{R}^{mC_\Gamma }$ by a linear transformation so that the each of the $C_\Gamma$ blocks of $m$ coordinates gives the local coordinates on $\Gamma$ in the first $d$ coordinates and on the orthogonal subspace in the remaining $m-d$ coordinates.  These maps are essentially the orthogonal projections $\tilde{\phi}_i$.




\section {Specifying the required size of the network} 
\label{sec:counting}
In the construction of Section \ref{sec:Main}, we approximate a function $f \in L_2(\Gamma)$ using a depth $4$ network, where the first layer computes the local coordinates in every chart in the atlas, the second layer computes $\rect$ functions that are to form trapezoids, the third layer computes scaling functions of the form $\varphi(2^\frac{k}{d}(x-b))$ for various $k,b$ and the fourth layer consists of a single node which computes
 \begin{equation}
 \hat{f} = \sum_{i=1}^{C_\Gamma} \sum_{(k,b)}\psi^{(i)}_{k,b},
 \end{equation}
 where $\psi^{(i)}_{k,b}$ is a wavelet term on the $i$'th chart. This network is sketched in Figure \ref{fig:netSketch}.
\begin{figure}[ht!]
  \centering
    \includegraphics[width=4.5in,height=2.5in]{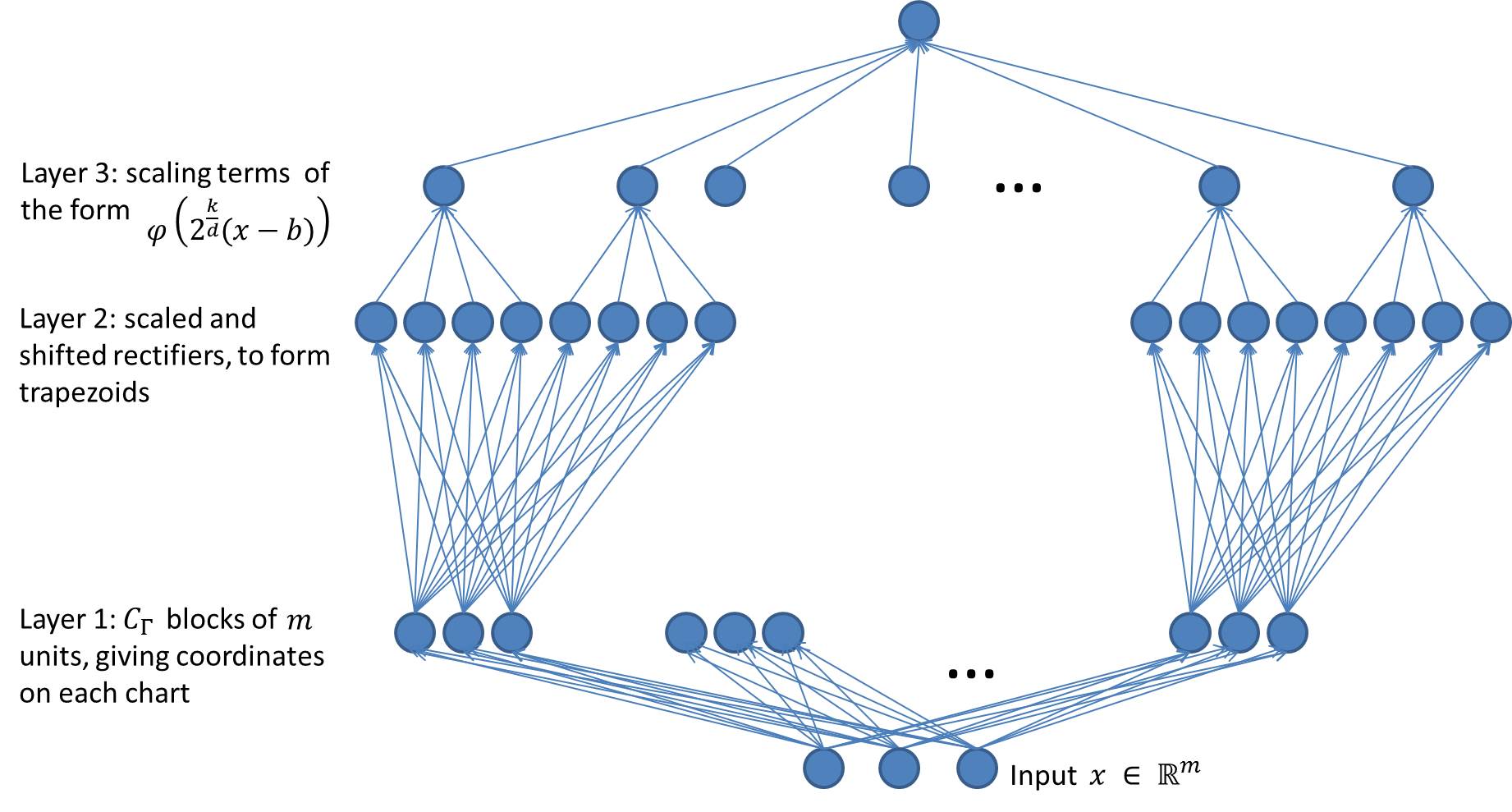}
   \caption{A sketch of the network.}
  \label{fig:netSketch}
\end{figure}

From this construction, we obtain the following theorem, which is the main result of this work:
\begin{theorem}\label{thm:main}
Let $\Gamma$ be a $d$-dimensional manifold in $\mathbb{R}^m$, and let $f \in L_2(\Gamma)$. Let $\{(U_i, \phi_i)\}$ be an atlas of size $C_\Gamma$ for $\Gamma$, as in Section \ref{sec:creatingAtlas}. Then $f$ can be approximated using a 4-layer network with
$mC_\Gamma$ linear units in the first hidden layer,
$8d\sum_{i=1}^{C_\Gamma}N_i +4C_\Gamma(m-d) $
rectifier units in the second hidden layer, $2\sum_{i=1}^{C_\Gamma}N_i$ rectifier units in the third layer and a single linear unit in the fourth (output) layer, where $N_i$ is the number of wavelet terms that are used for approximating $f$ on the $i$'th chart.
\end{theorem}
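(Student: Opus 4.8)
The plan is to treat this as a layer-by-layer accounting of the network assembled in Section~\ref{sec:Main}, since Theorem~\ref{thm:main} asserts only the topology and size of the approximating network; the fact that such a network actually approximates $f$ has already been secured by the preceding construction. Concretely, Lemma~\ref{lemma:sumFi} writes $f$ as $\sum_i \hat{f}_i \circ \phi_i$, Lemma~\ref{lemma:frame} expands each $\hat{f}_i$ in the ReLU wavelet frame, and Lemma~\ref{lemma:outsideSupport} guarantees that the extended wavelet terms of chart $i$ do not leak onto the parts of $\Gamma$ they must not touch. First I would fix, for each chart $i = 1,\dots,C_\Gamma$, a truncated expansion $\hat{f}_i \approx \sum_{(k,b)} \alpha_{k,b}\psi_{k,b}$ using $N_i$ wavelet terms, so that the network output is $\hat{f} = \sum_{i=1}^{C_\Gamma}\sum_{(k,b)}\psi^{(i)}_{k,b}$; everything then reduces to counting how many units each block of this expression costs.

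Next I would count layer by layer. The first hidden layer implements the $C_\Gamma$ orthogonal projections $\tilde{\phi}_i$ that send the input $x\in\mathbb{R}^m$ to its local coordinates on each chart; since each $\tilde{\phi}_i$ is a linear map into $\mathbb{R}^m$ and the $C_\Gamma$ blocks are disjoint, this is exactly $mC_\Gamma$ linear units. The remaining three layers are obtained by stacking the wavelet subnetworks of Remark~\ref{remark:NumUnitsForMother}, shifted up by one to make room for the coordinate layer. The key bookkeeping fact is that a single $\varphi$ evaluation on $\mathbb{R}^d$ costs $4d$ trapezoid-forming rectifiers feeding one combining rectifier, so a wavelet term $\psi_{k,b}$, being a difference of two such scaling functions by Remark~\ref{remark:motherFather}, costs $8d$ rectifiers in the lower layer and $2$ in the upper; summing over the $N_i$ terms of chart $i$ gives $8dN_i$ and $2N_i$ respectively.

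The only genuinely non-mechanical point, and the one I would be most careful about, is the extension to $\mathbb{R}^m$. In Equation~\eqref{eq:extendedFather} each extended scaling function $\varphi_r$ feeds $m-d$ additional trapezoids $t_r$ into the same combining rectifier, which naively would cost $4(m-d)$ rectifiers per evaluation and inflate the total by a factor of $N_i$. Here I would invoke Remark~\ref{remark:extensionUnits}: because the $m-d$ extension trapezoids neither scale with $k$ nor shift with $b$, their $4(m-d)$ outputs are identical across all $2N_i$ scaling-function evaluations within chart $i$ and can be computed once and fanned out. Hence the second hidden layer costs $8dN_i + 4(m-d)$ rectifiers for chart $i$, and summing over $i$ yields $8d\sum_i N_i + 4C_\Gamma(m-d)$, while the third layer consists of the combining rectifiers, one per $\varphi_r$, costing $2N_i$ per chart and $2\sum_i N_i$ in total.

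Finally I would collapse the output: all third-layer rectifiers across all charts feed a single linear unit forming $\hat{f}=\sum_i\sum_{(k,b)}\psi^{(i)}_{k,b}$, with the coefficients $2^{k/2}\alpha_{k,b}$ and $-2^{k/2-1}\alpha_{k,b}$ absorbed into that unit's weights, giving the one linear unit in the fourth layer. Assembling the four counts yields the stated topology. The main obstacle is therefore organizational rather than analytic: correctly aligning the three-layer wavelet gadget with the extra coordinate layer, and rigorously justifying the sharing of the extension rectifiers so that the $4C_\Gamma(m-d)$ term remains additive instead of multiplying the wavelet count.
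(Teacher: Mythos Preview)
Your proposal is correct and follows essentially the same route as the paper's own proof: invoke Lemma~\ref{lemma:sumFi} and Lemma~\ref{lemma:outsideSupport} to justify summing the per-chart approximations, then count layer by layer using Remark~\ref{remark:NumUnitsForMother} for the wavelet gadget and Remark~\ref{remark:extensionUnits} for the shared extension rectifiers. If anything, your treatment of the sharing of the $4(m-d)$ extension trapezoids is more explicit than the paper's.
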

\begin{proof}
As in Section \ref{sec:sumOfFunctions}, we construct functions $\hat{f}_i$ on $\mathbb{R}^d$ as in Equation (\ref{eq:fHat}), which, by Lemma \ref{lemma:sumFi}, have the property that for every $x \in \Gamma$, $\sum_{\{i:x \in U_i\}} \hat{f}_i(\phi_i(x)) = f(x)$. 
The fact that $\hat{f}_i$ is compactly supported means that its wavelet approximation converges to zero outside $\phi_i(U_i)$. Together with Lemma \ref{lemma:outsideSupport}, we then get that an approximation of $f$ is obtained by summing up the approximations of all the $\hat{f}_i$'s. 

A first layer of the network will consist $mC_\Gamma$ linear units and will compute the map as in the last paragraph of Section \ref{sec:extension}, i.e., linearly transform the input to $C_\Gamma$ blocks, each of dimension $m$, so that in each block $i$ the first $d$ coordinates are with respect to the tangent hyperplane $H_i$ (i.e., will give the representation $\tilde{\phi}_i(x)$) and the remaining $m-d$ coordinates are with respect to directions orthogonal to $H_i$.

For each $i=1,..,C_\Gamma$, we approximate each $\hat{f}_i$ to some desired approximation level $\delta$ using $N_i < \infty$ wavelet terms. 
By Remark \ref{remark:NumUnitsForMother}, $\hat{f}_i$ can be approximated using $8dN_i$ rectifiers in the second layer, $2N_i$ rectifiers in the third layer and a single unit in the fourth layer.  
By Remark \ref{remark:extensionUnits}, on every chart the wavelet terms in all scales and shifts can be extended to $\mathbb{R}^m$ using (the same) $4(m-d)$ rectifiers in the second layer. 

Putting this together we get that to approximate $f$ one needs a 4-layer network with
$mC_\Gamma$ linear units in the first hidden layer
$8d\sum_{i=1}^{C_\Gamma}N_i  + 4C_\Gamma(m-d)$
rectifier units in the second hidden layer, $2\sum_{i=1}^{C_\Gamma}N_i$ rectifier units in the third layer and a single linear unit in the fourth (output) layer.
\end{proof}

\begin{remark}
For sufficiently small radius $\delta$ in the sense of section~\ref{sec:compactManifolds}, the desired properties of $\hat{f}_i$ (i.e., being in $L_2$ and possibly having sparse coefficient or being twice differentiable) imply similar properties of $f$.
\end{remark}

\begin{remark}
We observe that the dependence on the dimension $m$ of the ambient space in the first and second layers is through $C_\Gamma$, which depends on the curvature of the manifold. The number $N_i$ of wavelet terms in the $i$'th chart affects the number of units in the second layer only through the dimension $d$ of the manifold, not through $m$. The sizes of the third and fourth layers do not depend on $m$ at all.
\end{remark}

Finally, assuming regularity conditions on the $\hat{f}_i$, allows us to bound the number $N_i$ of wavelet terms needed for the approximation of  $\hat{f}_i$. In particular, we consider two specific cases: $\hat{f}_i \in \mathcal{L}_1$ and $\hat{f}_i \in C^2$, with bounded second derivative. 

\begin{corollary}\label{cor:sparse}
If $\hat{f}_i \in \mathcal{L}_1$ (i.e., $\hat{f}_i$ has expansion coefficients in $l_1$), then by Theorem \ref{thm:BarronGreedy}, $\hat{f}_i$ can be approximated by a combination $\hat{f}_{i,N_i}$ of $N_i$ wavelet terms so that
\begin{equation}
\|\hat{f}_i- \hat{f}_{i,N_i}\|_2 \le  \frac{\| \hat{f}_i\|_{\mathcal{L}_1}}{\sqrt{N_i+1}}.
\end{equation}

Consequently, denoting the output of the net by $\tilde{f}$,  $N \equiv\max_i\{N_i\}$ and $M \equiv \max_i \|\hat{f}_i \|_{\mathcal{L}_1}$, we obtain 
\begin{equation}
\|f-\tilde{f} \|_2^2 \le \frac{C_\Gamma M}{N+1},
\end{equation}
using $c_1 + c_2N$ units, where 
$c_1=C_\Gamma(m +  4(m-d)) + 1$ and
$c_2 = (8d+2)C_\Gamma $. 
\end{corollary}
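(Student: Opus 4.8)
The statement bundles three claims: a per-chart approximation bound, its aggregation into a global bound on $\Gamma$, and the count of network units. My plan is to obtain the first and third almost immediately and to concentrate on the second.

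For the per-chart bound I would apply Theorem~\ref{thm:BarronGreedy} with $\mathcal{H}=L_2(\mathbb{R}^d)$ and the (unit-normalized) wavelet dictionary $\mathcal{D}=\{\psi_{k,b}\}$ of Lemma~\ref{lemma:frame}. Since $\hat f_i\in\mathcal{L}_1$ with respect to this frame---a property that, by Remark~\ref{remark:equivalentFrames}, does not depend on the admissible frame chosen---the orthogonal greedy algorithm run for $N_i$ steps returns an $N_i$-term combination $\hat f_{i,N_i}$ with
\begin{equation}
\|\hat f_i-\hat f_{i,N_i}\|_2\le \|\hat f_i\|_{\mathcal{L}_1}(N_i+1)^{-1/2},
\end{equation}
which is the first displayed inequality. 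The only care needed is to normalize the frame elements to unit norm, as Theorem~\ref{thm:BarronGreedy} presumes a unit-norm dictionary.

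To aggregate, I would use Lemma~\ref{lemma:sumFi} to write $f=\sum_i \hat f_i\circ\phi_i$ pointwise on $\Gamma$ and take the network output to be $\tilde f=\sum_i \hat f_{i,N_i}\circ\phi_i$, each summand being the $\mathbb{R}^m$-extension of Section~\ref{sec:extension}. Lemma~\ref{lemma:outsideSupport} ensures that the $i$-th block vanishes at points of $\Gamma$ that project into $\phi_i(U_i)$ but lie outside $U_i$, so the charts' approximations do not interfere. Writing $e_i:=(\hat f_i-\hat f_{i,N_i})\circ\phi_i$, I would bound $\|f-\tilde f\|_2=\|\sum_i e_i\|_2\le\sum_i\|e_i\|_2$ by the triangle inequality, identify each $\|e_i\|_{L_2(\Gamma)}$ with $\|\hat f_i-\hat f_{i,N_i}\|_{L_2(\mathbb{R}^d)}$ up to the Jacobian of $\phi_i$ (which for small $\delta$ is uniformly close to $1$, since the projection is close to the identity), and then insert the per-chart bound together with $\|\hat f_i\|_{\mathcal{L}_1}\le M$ and $N_i+1\ge N+1$.

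The main obstacle is the sharpness of this last step. The direct triangle-inequality estimate gives $\|f-\tilde f\|_2\le C_\Gamma M(N+1)^{-1/2}$, whose square is $C_\Gamma^2 M^2/(N+1)$ rather than the $C_\Gamma M/(N+1)$ asserted. Recovering the stated constant requires exploiting that the cover has bounded overlap: by the thickness bound $T_d$ of Section~\ref{sec:creatingAtlas}, each point of $\Gamma$ lies in only a bounded number of the $U_i$, so most cross terms $\langle e_i,e_j\rangle$ vanish and $\|\sum_i e_i\|_2^2$ is controlled by $\sum_i\|e_i\|_2^2\le C_\Gamma M^2/(N+1)$; quantifying this near-orthogonality, and tracking whether the surviving factor is $M$ or $M^2$, is where the real work lies. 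Finally, the unit count is purely arithmetic: substituting the $N_i$ into Theorem~\ref{thm:main} and using $\sum_i N_i\le C_\Gamma N$ collapses the four layer sizes into $C_\Gamma(m+4(m-d))+1+(8d+2)C_\Gamma N=c_1+c_2N$.
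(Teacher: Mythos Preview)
Your approach is exactly the one the paper has in mind: the corollary carries no separate proof, and the authors regard it as an immediate consequence of Theorem~\ref{thm:BarronGreedy} (for the per-chart bound) and of the layer counts in Theorem~\ref{thm:main} (for the arithmetic $c_1+c_2N$). Your derivation of the unit count is precisely the intended substitution $\sum_i N_i\le C_\Gamma N$ into Theorem~\ref{thm:main}.

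Two comments. First, a small slip: you write ``$N_i+1\ge N+1$'' to pass from the per-chart error to the global one, but since $N=\max_i N_i$ the inequality goes the other way. The clean fix is simply to take $N_i=N$ on every chart, which is consistent with the unit count $c_1+c_2N$ anyway. Second, you are right to flag the constant in the aggregated bound: the triangle inequality yields $\|f-\tilde f\|_2^2\le C_\Gamma^2 M^2/(N+1)$, and even under full orthogonality of the chart errors one gets at best $C_\Gamma M^2/(N+1)$. The factor $M$ (rather than $M^2$) in the paper's displayed bound is not something you should expect to recover from the stated hypotheses; the paper does not justify it, and dimensionally the squared $L_2$ error should scale with $M^2$. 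Your bounded-overlap argument is the correct refinement over the raw triangle inequality, but it improves the $C_\Gamma$ dependence, not the $M$ dependence.
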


\begin{corollary}\label{cor:c2}
If for each $i$ $\hat{f_i}$'s is twice differentiable and $\|\nabla^2_{f_i}\|_{op}$ is bounded, then by Lemma \ref{lemma:c2Approx} $\hat{f_i}$ can be approximated by $\hat{f}_{K,i}$ using all terms up to scale $K$ so that for every $x \in \mathbb{R}^d$
\begin{equation}
|\hat{f}_i(x)- \hat{f}_{i,K}(x)| = O \left(2^{-\frac{2K}{d}}\right).
\end{equation}

Observe that the grid spacing in the $k$'th level is $2^{-\frac{k}{d}}$. Therefore, since $f$ is compactly supported, there are $O\left(\left( 2^\frac{k}{d} \right)^d\right)=O\left(2^k\right)$ terms in the $k$'th level. Altogether, on the $i$'th chart there are $O\left(2^{K+1}\right)$ terms in levels less than $K$. Writing $N\equiv 2^{K+1}$, we get a point-wise error rate of $N^{-\frac{2}{d}}$ using $c_1 + c_2N$ units, where 
$c_1=C_\Gamma(m +  4(m-d)) + 1$ and
$c_2 = (8d+2)C_\Gamma $. 
\end{corollary}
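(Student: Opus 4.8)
The plan is to combine the pointwise guarantee of Lemma \ref{lemma:c2Approx} with a scale-by-scale count of the wavelet grid, and then feed the resulting term count into the architecture bound of Theorem \ref{thm:main}. First I would apply Lemma \ref{lemma:c2Approx} separately to each compactly supported $\hat{f}_i \in C^2$ with bounded Hessian, obtaining on each chart an approximant $\hat{f}_{i,K}$ built from all wavelet terms up to scale $K$ with $|\hat{f}_i(x) - \hat{f}_{i,K}(x)| = O(2^{-2K/d})$ uniformly in $x$. Since by Lemma \ref{lemma:sumFi} the reconstruction of $f$ is a sum over the charts whose $U_i$ contain $x$, the pointwise error of the assembled network output is at most the overlap multiplicity (bounded by $C_\Gamma$) times the per-chart error, hence still $O(2^{-2K/d})$.

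Next I would count the number of terms. At scale $k$ the offsets $b$ live on the lattice $2^{-k/d}\mathbb{Z}^d$, so the grid spacing is $2^{-k/d}$ and each lattice cell has volume $2^{-k}$. Because $\hat{f}_i$ is compactly supported --- its support has bounded $d$-dimensional volume, fixed by the chart radius $\delta$ and independent of $k$ --- only $O\big((2^{k/d})^d\big) = O(2^k)$ offsets can carry a wavelet that meets the support. Summing the geometric series over all scales up to $K$ gives $\sum_{k \le K} O(2^k) = O(2^{K+1})$ terms, which is the quantity I would take as $N_i$; setting $N = 2^{K+1}$ I would then invert the relation to read off $2^{-2K/d} = (N/2)^{-2/d} = O(N^{-2/d})$, converting the scale-indexed rate into the claimed pointwise rate $N^{-2/d}$.

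Finally I would substitute $N_i = O(N)$ for every chart into the unit counts of Theorem \ref{thm:main}. The first layer contributes $mC_\Gamma$ linear units; the second contributes $8d\sum_i N_i + 4C_\Gamma(m-d) = 8dC_\Gamma N + 4C_\Gamma(m-d)$; the third contributes $2\sum_i N_i = 2C_\Gamma N$; and the output layer contributes a single unit. Collecting the $N$-independent pieces into $c_1 = C_\Gamma(m + 4(m-d)) + 1$ and the coefficient of $N$ into $c_2 = (8d+2)C_\Gamma$ yields the stated $c_1 + c_2 N$ total.

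The main obstacle I anticipate is the counting step rather than anything analytic, since the analytic content is already discharged by Lemma \ref{lemma:c2Approx}. One has to argue cleanly that compact support bounds the number of active offsets at each scale by $O(2^k)$ --- this hinges on the support volume being fixed while the cell volume shrinks like $2^{-k}$ --- and that the contribution of the coarser scales does not dominate, which is controlled precisely because $\sum_{k \le K} 2^k = 2^{K+1}$ telescopes. A secondary point to handle carefully is that the uniform (pointwise) error, unlike the $L_2$ error of Corollary \ref{cor:sparse}, aggregates additively across the overlapping charts, so one must note that the overlap multiplicity (equivalently $C_\Gamma$, or more sharply the covering thickness $T_d$) is absorbed into the $O(\cdot)$ constant and therefore does not degrade the exponent.
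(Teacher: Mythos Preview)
Your proposal is correct and follows essentially the same route as the paper: the corollary in the paper carries its entire argument in the statement itself, and your three steps --- apply Lemma~\ref{lemma:c2Approx} per chart, count lattice points via the $2^{-k/d}$ spacing and sum the geometric series, then plug $N_i=O(N)$ into Theorem~\ref{thm:main} --- reproduce that argument exactly, with the added (and helpful) remark that the overlap multiplicity across charts is absorbed into the $O(\cdot)$ constant.
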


\begin{remark}
The unit count in Theorem~\ref{thm:main} and Corollaries \ref{cor:sparse} and \ref{cor:c2} is overly pessimistic, in the sense that we assume that the sets of wavelet terms in the expansion of $\hat{f}_i$, $\hat{f}_j$ do not intersect, where $i,j$ are chart indices. A tighter bound can be obtained if we allow wavelet functions be shared across different charts, in which case the term $C_\Gamma\sum N_i$ in Theorem~\ref{thm:main} can be replaced by the total number of distinct wavelet terms that are used on all charts, hence decreasing the constant $c_2$. In particular, in Corollary  \ref{cor:c2} we are using all terms up to the $K$'th scale on each chart. In this case the constant $c_2=8d+2$.
\end{remark}

\begin{remark}
The linear units in the first layer can be simulated using ReLU units with large positive biases, and adjusting the biases of the units in the second layer. Hence the first layer can contain ReLU units instead of linear units.
\end{remark}
\section{Conclusions}
\label{sec:conclusions}
The construction presented in this manuscript can be divided to two main parts: analytical and topological. In the analytical part, we constructed a wavelet frame if $L_2(\mathbb{R}^d)$, where the wavelets are computed from Rectified Linear units. In the topological part, given training data on a $d$-dimensional manifold $\Gamma$ we constructed an atlas and represented any function on $\Gamma$ as sum of functions that are defined on the charts. We then used Rectifier units to extend the wavelet approximation of the functions from $\mathbb{R}^d$ to the ambient space $\mathbb{R}^m$. This construction allows us to state the size of a depth 4 neural net given a function $f$ to be approximated on the manifold $\Gamma$. We show how the specified size depends on the complexity of the function (manifested in the number of wavelet terms in its approximation) and the curvature of the manifold (manifested in the size of the atlas). In particular, we take advantage of the fact that $d$ can possibly be much smaller than $m$ to construct a network with size that depends more strongly on $d$.
In addition, we also obtain squared error rate in approximation of functions with sparse wavelet expansion and point-wise error rate for twice differentiable functions.

The network architecture and corresponding weights presented in this manuscript is hand-made, and is such that achieves the approximation properties stated above. However, it is reasonable to assume that such network is unlikely to be the result of a standard training process. Hence, we see the importance of the results presented in this manuscript by describing the theoretical approximation capability of neural nets, and  not by describing trained nets which are used in practice.   

Several extensions of this work can be considered. First, a more efficient wavelet representation can be obtained on each chart if one allows its wavelets to be non-isotropic (that is, to scale differently in every dimension) and not necessarily axis aligned, but rather, to correspond to the level sets of the function being approximated. When the function is relatively constant in certain directions, the wavelet terms can be "stretched" in these directions. Such thing can be done using curvelets.

Second, we conjecture that in the representation obtained as an output of convolutional and pooling layers, the data concentrates near a collection of low dimensional manifolds embedded in a high dimensional space, which is our starting point in the current manuscript. 
We think that this is a result of the application of the same filters to all data points. Assuming our conjecture is true, one can apply our construction to the output of convolutional layers, and by that obtain a network topology which is similar to standard convolutional networks, namely fully connected layers on top of convolutional ones. This will make or arguments here applicable to cases where the data in its initial representation does not concentrate near low dimensional manifold, but its hidden representation does. 

Finally, we remark that the choice of using rectifier units to construct our wavelet frame is convenient, however somewhat arbitrary. Similar wavelet frames can be constructed by any function (or combination of functions) that can be used to construct ``bump'' functions i.e., functions which are localized and have fast decay. For example, general sigmoid functions $\sigma:\mathbb{R} \rightarrow \mathbb{R}$, which are monotonic and have the properties
\begin{equation}
\lim_{x \rightarrow -\infty}\sigma(x)=0 \text { and } \lim_{x \rightarrow \infty}\sigma(x)=1
\end{equation}
can used to construct a frame in a similar way, by computing ``smooth'' trapezoids. Recall also that by Remark \ref{remark:equivalentFrames}, any two such frames are equivalent.

\section*{Acknowledgements}
The authors thank Stefan Steinerberger, Roy Lederman for their help, and to Andrew Barron, Ed Bosch, Mark Tygert and Yann LeCun for their comments. Alexander Cloninger is supported by NSF Award No. DMS-1402254.

\bibliography{draft}{}
\bibliographystyle{plain}


\appendix
\newpage

\section{Equivalence of representations in different wavelet frames}
\label{app:equivFrame}
Consider to frames $\{\psi_{k,b} \}$ and $\{\psi'_{k,b} \}$. Any element $\psi'_{k',b'}$ can be represented as
\begin{equation}
\psi'_{k',b'} = \sum_{k,b}\langle \psi'_{k',b'},\widetilde{\psi}_{k,b} \rangle \psi_{k,b}.
\end{equation}
Observe that in case $k\approx k'$, the inner product is of large magnitude only for a small number of $b'$s. In case $k \ll k'$ or $k \gg k'$, the inner product is between peaked function which integrates to zero and a flat function, hence has small magnitude. This idea is formalized  in a more general form in Section $4.7$ in \cite{deng2009harmonic}.

\section{Proof of Lemma \ref{lemma:conditions}}.
\label{app:conditions}
\begin{proof}
In order to show that the family $\{S_k\}$ in Equation \eqref{eq:Sdef} is a valid family of averaging kernel functions, we need to verify that conditions $3.14-3.19$ in \cite{deng2009harmonic} are satisfied. Here $\rho(x,b)$ is the volume of the smallest Euclidean ball which contains $x$ and $b$, namely $\rho(x,b) = c\|x-b\|^d$, for some constant $c$. Our goal is to show that there exist constants $C \le \infty$, $\sigma >0$ and $\epsilon >0$ such that for every $k\in \mathbb{Z}$, and $x,x',b,b' \in \mathbb{R}^d$

\begin{itemize}
\item $3.14$: 
\begin{equation} \label{eq:cond3.14}
S_k(x,b) \le C\frac{2^{-k\epsilon}}{(2^{-k}+\rho(x,b))^{1+\epsilon}},
\end{equation}
\begin{proof}
WLOG we can assume $b=0$, and let $\epsilon$ be arbitrary positive number.
It can be easily verified that there exists a constant $C'$ such that 
\begin{equation}
\varphi(x) \le \frac{C'}{\left(c^{-1} + \|x\|^d \right)^{1+\epsilon}}.
\end{equation}
Then 
\begin{align}
S_k(x,0) &=2^{k}\varphi\left(2^\frac{k}{d}x\right)\\
&\le C'\frac{2^k}{\left(c^{-1} + 2^{k}\|x\|^d \right)^{1+\epsilon}}\\
&= C'\frac{2^{k(1+\epsilon)}2^{-k\epsilon}}{\left(c^{-1} + 2^{k}\|x\|^d \right)^{1+\epsilon}}\\
&= C'\frac{2^{-k\epsilon}}{\left(c^{-1}2^{-k} + \|x\|^d \right)^{1+\epsilon}}\\
&= C_1\frac{ 2^{-k\epsilon}}{\left(2^{-k} + \rho(x,0) \right)^{1+\epsilon}},
\end{align}
where $C_1 = c^{1+\epsilon}C'$.
\end{proof}

\item $3.15,3.16$:  Since $S_k(x,b)$ depends only on $x-b$ and is symmetric about the origin, it suffices to prove only $3.15$. We want to show that if $\rho(x,x') \le \frac{1}{2A}(2^{-k}+\rho(x,b))$ then 
\begin{equation}
|S_k(x,b)-S_k(x',b)| \le C\left(\frac{\rho(x,x')}{2^{-k}+\rho(x,b)} \right)^\sigma\frac{2^{-k\epsilon}}{(2^{-k}+\rho(x,b))^{1+\epsilon}}.
\end{equation}
\begin{proof}
WLOG $b=0$; we will prove for every $x,x'$. Let $\epsilon$ be arbitrary positive number, and let $\sigma = \frac{1}{d}$.
By the mean value theorem we get
\begin{equation}
\frac{|S_k(x,0)-S_k(x',0)|}{\rho(x,x')^\sigma} \le \max_{z_k \text{ between } x,x'} \frac{1}{c}\|\nabla_x (S_k(z_k,0))\|.
\end{equation}
Denote
\begin{equation}
F(x) \equiv \|\nabla_x (S_0(x,0))\|.
\end{equation}
Then
\begin{equation}
\|\nabla_x (S_k(x,0))\| = 2^k2^\frac{k}{d}F\left(2^\frac{k}{d}x\right).
\end{equation}

As in the proof of condition $3.14$, it can be easily verified that there exists a constant $C'$ such that
\begin{equation}
F(x) \le C'\frac{1}{\left(c^{-1}+\|x\|^d\right)^\sigma}\frac{1}{(c^{-1}+\|x\|^d)^{1+\epsilon}}.
\end{equation}
We then get
\begin{align}
\frac{|S_k(x,b)-S_0(x',b)|}{\rho(x,x')^\sigma} &= \frac{1}{c}\|\nabla_x (S_k(z_k,0))\|\\
&= 2^k2^\frac{k}{d}F \left(2^\frac{k}{d}\right)\\
&\le C'\frac{2^\frac{k}{d}}{\left(c^{-1}+2^k\|x\|^d\right)^\sigma}\frac{2^k}{(c^{-1}+2^k\|x\|^d)^{1+\epsilon}}\\
&= C'\frac{2^\frac{k}{d}}{\left(c^{-1}+2^k\|x\|^d\right)^\sigma}\frac{2^{k(1+\epsilon)}2^{-k\epsilon}}{(c^{-1}+2^k\|x\|^d)^{1+\epsilon}}\\
&= C'\frac{1}{\left(c^{-1}2^{-k}+\|x\|^d\right)^\sigma}\frac{2^{-k\epsilon}}{(c^{-1}2^{-k}+\|x\|^d)^{1+\epsilon}}\\
&= C_2\frac{1}{\left(2^{-k}+\rho(x,0)\right)^\sigma}\frac{2^{-k\epsilon}}{(2^{-k}+\rho(x,0))^{1+\epsilon}},
\end{align}
where $C_2 = c^{\sigma + 1+\epsilon}C'$.
\end{proof}

\item $3.17,3.18$: 
Since $S_k(x,b)$ depends only on $x-b$ and is symmetric about the origin, it suffices to prove only $3.17$.
\begin{proof}
By Equation \eqref{eq:father}
\begin{equation}
\int_{\mathbb{R}^d}\varphi(x)dx=1
\end{equation}
and consequently for every $k\in \mathbb{Z}$ and $b \in \mathbb{R}^d$
\begin{equation}
\int_{\mathbb{R}^d}S_k(x,b)dx=1.
\end{equation}
\end{proof}

\item $3.19$: we want to show if $\rho(x,x') \le \frac{1}{2A}(2^{-k}+\rho(x,b))$ and $\rho(b,b') \le c(2^{-k}+\rho(x,b))$ then
\begin{align}
&|S_k(x,b)-S_k(x',b) - S_k(x,b') + S_k(x',b')|\\
& \le C\left(\frac{\rho(x,x')}{2^{-k}+\rho(x,b)} \right)^\sigma\left(\frac{\rho(b,b')}{2^{-k}+\rho(x,b)} \right)^\sigma\frac{2^{-k\epsilon}}{(2^{-k}+\rho(x,b))^{1+\epsilon}}.
\end{align}

\begin{proof}
We will prove for all $x,x',b,b'$. Let $\sigma = \frac{1}{d}$.  Observe that 
\begin{align}
&\frac{|S_k(x,b)-S_k(x',b) - S_k(x,b') + S_k(x',b')|}{\rho(x,x')^\sigma\rho(b,b')^\sigma} \\
&\le \frac{|\frac{|S_k(x,b)-S_k(x',b)|}{\rho(x,x')^\sigma} + \frac{|S_k(x,b') + S_k(x',b')|}{\rho(x,x')^\sigma}   |}{\rho(b,b')^\sigma}\\
\end{align}
Denote
\begin{equation}
F(b) \equiv \frac{|S_k(x,b)-S_k(x',b)|}{\rho(x,x')^\sigma}.
\end{equation}
Then by applying the mean value theorem twice we get
\begin{align}
&\frac{|\frac{|S_k(x,b)-S_k(x',b)|}{\rho(x,x')^\sigma} + \frac{|S_k(x,b') + S_k(x',b')|}{\rho(x,x')^\sigma}   |}{\rho(b,b')^\sigma}\\
&= \frac{|F(b)-F(b')|}{\rho(b,b')^\sigma}\\
& \frac{1}{c}\le \max_{z \text{ between } b,b'}\nabla_b(F(z))\\
&= \frac{1}{c}\max_{z \text{ between } b,b'} \nabla_b\left(\frac{|S_k(x,z)-S_k(x',z)|}{\rho(x,x')^\sigma}\right)\\
&\frac{1}{c^2}\le \max_{z \text{ between } b,b'}\max_{z' \text{ between } x,x'} \|\nabla^2_{x,b}(S_k(z',z))\|
\end{align}

From this, we can see that 
Since $S_k$ is compactly supported and bounded, there exist compactly supported function $\xi(x)$ such that 
\begin{align}
&\frac{|S_0(x,b)-S_0(x',b) - S_0(x,b') + S_0(x',b')|}{\rho(x,x')^\sigma\rho(b,b')^\sigma} \\ &\le \xi(x-b) + \xi(x-b'),
\end{align}
and consequently
\begin{align}
&\frac{|S_k(x,b)-S_k(x',b) - S_k(x,b') + S_k(x',b')|}{\rho(x,x')^\sigma\rho(b,b')^\sigma} |\\ &\le 2^k2^\frac{2k}{d}\left(\xi\left(2^\frac{k}{d}(x-b)\right) +  \xi\left(2^\frac{k}{d}(x-b')\right)\right).
\end{align}
As in the proof of conditions $3.14,3.15$, there exists a constant $C'$ such that
\begin{equation}
\xi(x-b) + \xi(x-b') \le C'\frac{1}{(c^{-2}+\|x-b \|^d)^{2\sigma}}\frac{1}{(c^{-1}+\|x-b\|^d)^{1+\epsilon}}.
\end{equation}
We then get
\begin{align}
&\frac{|S_k(x,b)-S_k(x',b) - S_k(x,b') + S_k(x',b')|}{\rho(x,x')^\sigma \rho(b,b')^\sigma}\\
&\le 2^k2^\frac{2k}{d}\left(\xi\left(2^\frac{k}{d}(x-b)\right) + \xi\left(2^\frac{k}{d}(x-b')\right) \right)\\ 
& \le C'\frac{2^\frac{2k}{d}}{(c^{-2}+2^k\|x-b \|^d)^{2\sigma}}\frac{2^k}{(c^{-1}+2^k\|x-b\|^d)^{1+\epsilon}}\\
& = C'\frac{1}{(c^{-2}2^{-k}+\|x-b \|^d)^{2\sigma}}\frac{2^{-k\epsilon}}{(c^{-1}2^{-k}+\|x-b\|^d)^{1+\epsilon}}\\
& = C_3\frac{1}{(2^{-k}+\rho(x,b))^{2\sigma}}\frac{2^{-k\epsilon}}{(2^{-k}+\rho(x,b))^{1+\epsilon}},
\end{align}
where $C_3 = c^{2\sigma+1+\epsilon}$.
\end{proof}
\end{itemize}
Finally, we set $C = \max\{C_1,C_2,C_3\}$.
\end{proof}




\section{Proof of Lemma \ref{lemma:c2Approx}}
\label{app:c2Approx}
We first prove the following propositions.
\begin{proposition}
\label{prop:vanishing}
For each ${k,b}$, $\psi_{k,b},\, \widetilde{\psi}_{k,b}$  have two vanishing moments.
\end{proposition}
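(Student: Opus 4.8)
The plan is to treat $\psi_{k,b}$ and its dual $\widetilde{\psi}_{k,b}$ separately: the primal wavelet admits a short direct computation, whereas the dual must be handled through the general frame theory of \cite{deng2009harmonic}. Throughout I read ``two vanishing moments'' as the vanishing of the zeroth and first moments, i.e. $\int_{\mathbb{R}^d}\psi_{k,b}(x)\,dx=0$ and $\int_{\mathbb{R}^d}x_j\,\psi_{k,b}(x)\,dx=0$ for each coordinate $j$; this is exactly the cancellation that, applied to the dual, produces the second-order rate $2^{-2K/d}$ asserted in Lemma~\ref{lemma:c2Approx}.

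For the zeroth moment of $\psi_{k,b}$ I would write $\psi_{k,b}=2^{-k/2}\big(S_k(\cdot,b)-S_{k-1}(\cdot,b)\big)$ and invoke the normalization established as condition $3.17$ in the proof of Lemma~\ref{lemma:conditions}, namely $\int_{\mathbb{R}^d}S_k(x,b)\,dx=1$ for every $k$ and $b$. This gives $\int_{\mathbb{R}^d}\psi_{k,b}=2^{-k/2}(1-1)=0$ at once. For the first moment I would exploit symmetry: each coordinate enters $\varphi$ only through the even trapezoid $t(x_j)$, so $\varphi(-x)=\varphi(x)$, whence each $S_k(\cdot,b)$ and therefore $\psi_{k,b}$ is symmetric about $b$. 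Substituting $u=x-b$, the integrand $u_j\,\psi_{k,b}(b+u)$ is odd under $u\mapsto-u$, so $\int(x_j-b_j)\psi_{k,b}(x)\,dx=0$; combined with the zeroth-moment identity this yields $\int x_j\,\psi_{k,b}(x)\,dx=b_j\int\psi_{k,b}(x)\,dx=0$.

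The dual $\widetilde{\psi}_{k,b}$ is where I expect the main obstacle. In the frame (rather than orthonormal) setting $\widetilde{\psi}_{k,b}$ has no closed form, and the oddness trick that produced the primal's first vanishing moment does not transfer, since the frame operator $T=\sum_{k,b}\langle\cdot,\psi_{k,b}\rangle\psi_{k,b}$ does not commute with reflection about a single center $b$. What is available for free is the zeroth moment $\int\widetilde{\psi}_{k,b}=0$ together with the size and smoothness bounds, all coming from the fact (Remark~\ref{remark:duals}) that $\{\widetilde{\psi}_{k,b}\}$ is again a wavelet frame satisfying the regularities $3.14$--$3.19$, in the sense also used in Remark~\ref{remark:equivalentFrames}. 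To obtain the second vanishing moment I would invoke the stronger cancellation built into the Deng--Han construction behind Theorem~\ref{thm:3.25}, where the dual elements are realized as molecules carrying the same order of cancellation as the generators $\psi_{k,b}$. I would cite the corresponding cancellation estimate in \cite{deng2009harmonic} rather than attempt a self-contained computation, since reproducing that estimate is precisely the delicate technical core of the statement, and it is the step on which the rigor of the dual case genuinely rests.
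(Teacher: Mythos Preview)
Your treatment of the primal wavelets $\psi_{k,b}$ is essentially identical to the paper's: both use the normalization $\int S_k(\cdot,b)=1$ for the zeroth moment and the central symmetry of $\varphi$ for the first moment, with only cosmetic differences in how the change of variables is phrased.

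For the duals, however, you have correctly located the difficulty but not the mechanism the paper uses to resolve it. You propose to invoke an unspecified ``cancellation estimate'' from \cite{deng2009harmonic} and explicitly concede that you are not supplying a self-contained argument. The paper's route is more concrete and worth knowing: it observes (citing p.~82 of \cite{deng2009harmonic}) that the dual elements are obtained by \emph{convolution with the operators $D_k$}. Since the primal computation already shows that each $D_k$ annihilates constants and linear functions, the vanishing moments transfer automatically through the convolution: if $\widetilde{\psi}=D_k*h$, then $\int\widetilde{\psi}=\big(\int D_k\big)\big(\int h\big)=0$ and, after the substitution $u=x-y$, $\int x\,\widetilde{\psi}(x)\,dx=\int h(y)\big[\int u\,D_k(u)\,du+y\int D_k(u)\,du\big]dy=0$. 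So no separate regularity or molecule estimate is needed; the dual inherits exactly the cancellation of $D_k$.

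In short, the gap in your proposal is not an error but a missed structural observation: the convolution representation of the duals reduces their vanishing moments to the primal computation you already carried out, and this is what the paper exploits.
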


\begin{proof}
Note that a function $f$ on $\mathbb{R}^d$ which is symmetric about the origin satisfies
\begin{equation}
\int_{\mathbb{R}^d} xf(x)dx=0.
\end{equation}
We first show that for every $(k,b) \in \Lambda$, $\psi_{k,b}$ has two vanishing moments.
For each $(k,b) \in \mathbb{Z}\times\mathbb{R}^d$
\begin{align}
2^{-k}\int_{\mathbb{R}^d}\varphi(2^\frac{k}{d}(x-b))dx & = \int_{\mathbb{R}^d}\varphi(x)dx\\
& = 1,
\end{align}
by change of variables. This gives that for every $(k,b) \in \mathbb{Z}\times\mathbb{R}^d$
\begin{align}
\int_{\mathbb{R}^d}\psi_{k,b}(x)dx &= 2^{\frac{k}{2}}\int_{\mathbb{R}^d}\varphi(2^\frac{k}{d}(x-b) - \varphi\left(2^\frac{k-1}{d}(x-b)\right)dx \\
&=0,
\end{align}
Hence the first moment of $\psi_{k,b}$ vanishes. Further, since $\varphi$ is symmetric about the origin we have 
\begin{align}
\int_{\mathbb{R}^d}x\varphi\left(2^\frac{k}{d}(x-b)\right)dx & = \int_{\mathbb{R}^d}(2^{-\frac{k}{d}}y+b)\varphi(y)dy\\
& = 2^{-k}b\int_{\mathbb{R}^d}\varphi(y)dy\\
& = 2^{-k}b,
\end{align}
which gives
\begin{align}
\int_{\mathbb{R}^d}x\psi_{k,b}(x)dx &= 2^{-\frac{k}{2}}\int_{\mathbb{R}^d}\varphi\left(2^\frac{k}{d}(x-b) \right)-2^{-1}\varphi\left(2^\frac{k-1}{d}(x-b) \right)dx\\
&= 2^{-\frac{k}{2}}\left(2^{-k}b - 2^{-1}2^{-(k-1)}b\right) \\
&= 2^{-\frac{k}{2}}\left(2^{-k}b - 2^{-k}b\right) \\
&=0,
\end{align}
hence the second moment of $\psi_{k,b}$ also vanishes.

Finally, to show that the functions $\widetilde{\psi}_{k,b}$ have two vanishing moments as well,
we note that the dual functions are obtained using convolution with operators $D_k$ (\cite{deng2009harmonic}, p. 82), which, by the above arguments, have two vanishing moments; hence they inherit this property. 
\end{proof}

\begin{proposition} \label{prop:fastDecay}
For every $(k,b)$, $\widehat{\psi}_{k,b}$ decays faster than any polynomial.
\end{proposition}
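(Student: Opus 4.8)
The plan is to control the Fourier decay through the spatial smoothness of $\psi_{k,b}$, after two reductions that leave the qualitative decay rate unchanged. First I would reduce to the base wavelet $\psi = \psi_{0,0}$: by the dilation relation \eqref{eq:motherFromFather}, $\psi_{k,b}(x) = 2^{k/2}\psi(2^{k/d}(x-b))$, so a change of variables gives $\widehat{\psi}_{k,b}(\omega) = 2^{-k/2} e^{-i\langle \omega, b\rangle}\,\widehat{\psi}(2^{-k/d}\omega)$. Since dilating the frequency and multiplying by a unimodular factor both preserve the property of decaying faster than any polynomial, it suffices to treat $\widehat{\psi}$. Next I would reduce from $\psi$ to the scaling function $\varphi$: because $\psi(x) = \varphi(x) - 2^{-1}\varphi(2^{-1/d}x)$, linearity and the scaling rule for the Fourier transform yield $\widehat{\psi}(\omega) = \widehat{\varphi}(\omega) - \widehat{\varphi}(2^{1/d}\omega)$, a finite combination of frequency-dilated copies of $\widehat{\varphi}$. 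Hence the whole proposition follows once $\widehat{\varphi}$ is shown to decay faster than any polynomial.

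The decay of $\widehat{\varphi}$ I would obtain by the standard smoothness-to-decay principle. If $\varphi$ is compactly supported and lies in $C^r$ with all partial derivatives up to order $r$ integrable, then integrating by parts $r$ times in a coordinate direction gives $\widehat{\varphi}(\omega) = (i\omega_j)^{-r}\,\widehat{\partial_j^r \varphi}(\omega)$, and since each $\widehat{\partial_j^r\varphi}$ is bounded one gets $\widehat{\varphi}(\omega) = O(|\omega|^{-r})$ away from the coordinate axes; summing such estimates over directions (equivalently, using the Laplacian in place of a single $\partial_j$) controls the decay uniformly, and letting $r\to\infty$ when $\varphi \in C_c^\infty$ yields decay faster than every polynomial. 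Compact support, hence boundedness of the transforms of all derivatives, is immediate from \eqref{eq:father}, and the two vanishing moments established in Proposition \ref{prop:vanishing} already force $\widehat{\psi}$ to vanish to second order at $\omega=0$, so the estimate is uniform down to the origin.

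The main obstacle is precisely the required smoothness. The rectifier-based $\varphi$ of \eqref{eq:father} is only piecewise linear, i.e.\ Lipschitz but not $C^1$, so the integration-by-parts argument stalls after two steps and yields merely $\widehat{\varphi}(\omega) = O(|\omega|^{-2})$; the genuinely super-polynomial statement cannot hold for the sharp-cornered construction. To reach ``faster than any polynomial'' I would replace the piecewise-linear trapezoid $t$ by a $C^\infty$ trapezoid assembled from smooth sigmoids, exactly as sanctioned in the concluding remark, so that $\varphi \in C_c^\infty$. By Remark \ref{remark:equivalentFrames} this produces an equivalent frame sharing the same $\mathcal{L}_1$ and regularity theory, and the iterated integration by parts then runs to arbitrary order. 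Thus the real content of the proposition is not the Fourier estimate itself, which is routine once the wavelet is smooth, but arranging that the frame element be genuinely $C^\infty$ rather than merely Lipschitz.
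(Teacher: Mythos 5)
There is a genuine gap here, and it lies not in your Fourier analysis but in the identification of the object being estimated. The hat in $\widehat{\psi}_{k,b}$ is a typo for a tilde: the proposition concerns the \emph{dual} frame elements $\widetilde{\psi}_{k,b}$ of Theorem \ref{thm:3.25}, and the decay asserted is \emph{spatial} decay, not decay of a Fourier transform. You can see this from how the proposition is used: in the proof of Proposition \ref{prop:coeffs} it is invoked exactly once, to guarantee that $\int_{\supp(\widetilde{\psi})}\widetilde{\psi}(y)\|y\|_2^2\,dy$ is finite --- a statement about the spatial tails of $\widetilde{\psi}$. For the primal wavelets the spatial statement would be vacuous (they are compactly supported by \eqref{eq:father}), and the Fourier transform of $\psi_{k,b}$ is never used anywhere in the paper. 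The paper's actual proof is soft: by \cite{deng2009harmonic} (p.~82) the duals are themselves wavelets and hence satisfy the size condition $3.14$, i.e.\ $|\widetilde{\psi}_{k,b}(x)| \le C\,2^{-k\epsilon'}/(2^{-k}+\rho(x,b))^{1+\epsilon'}$, for any $\epsilon' < \epsilon$; since Lemma \ref{lemma:conditions} is proved for an \emph{arbitrary} $\epsilon > 0$, the duals satisfy the size condition for every exponent, which is exactly super-polynomial spatial decay. No smoothness of $\varphi$ enters at all.

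Your own argument, read on its terms, actually signals that the interpretation is off: you correctly compute that the piecewise-linear $\varphi$ gives only $\widehat{\varphi}(\omega) = O(|\omega|^{-2})$, so under your reading the proposition is simply false for the construction at hand, and your repair --- replacing the ReLU trapezoid by a $C^\infty$ sigmoid trapezoid --- discards the central point of the paper, namely that the frame is computed exactly by rectifier units. When a blind reading forces you to falsify the statement and alter the paper's construction to rescue it, that is strong evidence the statement refers to a different object. Your two opening reductions (invariance of the decay property under the scale/shift relation \eqref{eq:motherFromFather}, and $\psi = \varphi - 2^{-1}\varphi(2^{-1/d}\,\cdot)$) are correct and harmless, but the heart of the matter --- pushing the kernel estimates of Lemma \ref{lemma:conditions}, valid for arbitrarily large $\epsilon$, through the dual construction --- is absent from your proposal.
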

\begin{proof}
By (\cite{deng2009harmonic}, p. 82), the dual functions are also wavelets, hence they satisfy condition $3.14$ in \cite{deng2009harmonic} with $\epsilon' < \epsilon$. Since in the proof of Lemma \ref{lemma:conditions}, $\epsilon$ can be arbitrarily large, it implies that the duals satisfy condition $3.14$ with any $\epsilon$, which proves the proposition.   
\end{proof}

\begin{proposition}\label{prop:boundedWavelet}
$|\psi_{k, b}| \le 2^{\frac{k}{2}-2}$.
\end{proposition}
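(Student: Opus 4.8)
The plan is to reduce the bound on the general wavelet $\psi_{k,b}$ to a bound on the single ``mother'' function $\psi$, and then to estimate the latter through the scaling function $\varphi$. By Equation \eqref{eq:motherFromFather}, $\psi_{k,b}(x) = 2^{k/2}\psi(2^{k/d}(x-b))$, so that $\|\psi_{k,b}\|_\infty = 2^{k/2}\|\psi\|_\infty$; hence it suffices to prove $\|\psi\|_\infty \le 2^{-2} = \tfrac14$, after which the claimed $|\psi_{k,b}| \le 2^{k/2 - 2}$ follows immediately.

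To bound $\|\psi\|_\infty$ I would exploit that $\varphi \ge 0$ everywhere, being a rectifier applied to a real argument scaled by the positive constant $C_d$. Writing $\psi(x) = \varphi(x) - \tfrac12\varphi(2^{-1/d}x)$, both $\varphi(x)$ and $\tfrac12\varphi(2^{-1/d}x)$ are nonnegative, so $\psi(x) \le \varphi(x) \le \|\varphi\|_\infty$ and $\psi(x) \ge -\tfrac12\varphi(2^{-1/d}x) \ge -\tfrac12\|\varphi\|_\infty$. Consequently $|\psi(x)| \le \|\varphi\|_\infty$, and the problem collapses to showing $\|\varphi\|_\infty \le \tfrac14$.

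Next I would compute $\|\varphi\|_\infty$ explicitly. The trapezoid $t$ attains its maximum value $2$ on $[-1,1]$, so $\sum_{j=1}^d t(x_j)$ is at most $2d$, attained exactly on the cube $[-1,1]^d$; thus the rectifier argument $\sum_j t(x_j) - 2(d-1)$ in Equation \eqref{eq:father} is at most $2$, giving $\|\varphi\|_\infty = 2C_d$. It therefore remains only to establish the single inequality $C_d \le \tfrac18$.

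The main work is this last bound on the normalizing constant. Since $\int_{\mathbb{R}^d}\varphi = 1$, we have $C_d = \bigl(\int_{\mathbb{R}^d}\rect(\sum_j t(x_j) - 2(d-1))\,dx\bigr)^{-1}$, so I must show this integral is at least $8$. For $d = 1$ it equals $\int_{\mathbb{R}} t(x)\,dx$, the area of a trapezoid with parallel bases $6$ and $2$ and height $2$, namely exactly $8$, recovering $C_1 = \tfrac18$. For $d \ge 2$ the integrand equals $2$ on the cube $[-1,1]^d$ (where every $t(x_j)=2$), whose volume is $2^d$, so the integral is at least $2^{d+1} \ge 8$ and hence $C_d \le 2^{-(d+1)} \le \tfrac18$. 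The delicate point is precisely that $d=1$ is the tight case: the crude central-cube volume bound $2^{d+1}$ gives only $4$ there and one must instead use the exact one-dimensional trapezoid area, while for every larger $d$ the volume of $[-1,1]^d$ already supplies the needed slack. Combining $\|\psi\|_\infty \le \|\varphi\|_\infty = 2C_d \le \tfrac14$ with the scaling identity completes the proof.
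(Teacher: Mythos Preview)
Your proof is correct and follows essentially the same approach as the paper: reduce via the scaling identity \eqref{eq:motherFromFather} to $\|\psi\|_\infty \le \tfrac14$, bound $|\psi|$ by $\|\varphi\|_\infty = 2C_d$ using nonnegativity of $\varphi$, and then show $C_d \le \tfrac18$. The paper's proof simply asserts $C_d \le \tfrac{1}{2\cdot 2^d}$ for $d\ge 2$ (and implicitly relies on the earlier stated $C_1 = \tfrac18$ for the case $d=1$), whereas you supply the central-cube argument justifying this bound and handle $d=1$ explicitly via the trapezoid area; otherwise the arguments are identical.
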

\begin{proof}
We note that for all $d \ge2$, $C_d \le \frac{1}{2\cdot 2^d} \le \frac{1}{8}$. Hence $\varphi(x)\le \frac{1}{4}$, and consequently $|\psi(x)| \le \frac{1}{4}$. Since
\begin{equation}
\psi_{k,b}(x) =2^\frac{k}{2}\psi\left(2^\frac{k}{d}x-b)\right) 
\end{equation}
we get that $|\psi_{k, b}| \le 2^{\frac{k}{2}-2}$.
\end{proof}

\begin{proposition}\label{prop:coeffs}
if $f\in C^2$ and $\|\nabla^2_f \|_{op}$ is bounded, then
The coefficients $\langle \widetilde{\psi}_{k,b},f \rangle$ satisfy
\begin{equation}
|\langle\widetilde{\psi}_{k,b},f\rangle| = O(2^{-({2\frac{k}{d}}+\frac{k}{2} )})
\end{equation}
\end{proposition}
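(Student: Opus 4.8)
The plan is to exploit the two vanishing moments of $\widetilde{\psi}_{k,b}$ established in Proposition~\ref{prop:vanishing} to replace $f$ by its second-order Taylor remainder about the wavelet center $b$, and then to control that remainder using the bounded Hessian. Concretely, I would first write the coefficient as
\begin{equation}
\langle \widetilde{\psi}_{k,b}, f\rangle = \int_{\mathbb{R}^d} \widetilde{\psi}_{k,b}(x)\, f(x)\, dx,
\end{equation}
and expand $f$ around $b$ by Taylor's theorem as $f(x) = f(b) + \nabla f(b)\cdot(x-b) + R(x)$, where the remainder obeys $|R(x)| \le \tfrac12 \|\nabla^2_f\|_{op}\,\|x-b\|^2$ since the Hessian is bounded. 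Because $\int \widetilde{\psi}_{k,b} = 0$ and $\int (x-b)\widetilde{\psi}_{k,b}(x)\,dx = 0$, the constant and linear parts integrate to zero against $\widetilde{\psi}_{k,b}$, leaving
\begin{equation}
|\langle \widetilde{\psi}_{k,b}, f\rangle| = \left| \int_{\mathbb{R}^d} \widetilde{\psi}_{k,b}(x)\, R(x)\, dx \right| \le \frac{1}{2}\,\|\nabla^2_f\|_{op} \int_{\mathbb{R}^d} |\widetilde{\psi}_{k,b}(x)|\, \|x-b\|^2\, dx.
\end{equation}

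Next I would reduce the weighted integral to the base dual wavelet by the change of variables $y = 2^{k/d}(x-b)$, using the scaling $\widetilde{\psi}_{k,b}(x) = 2^{k/2}\widetilde{\psi}(2^{k/d}(x-b))$ that the duals inherit from the dyadic structure of Remark~\ref{remark:motherFather}. The Jacobian contributes $2^{-k}$ and the weight becomes $\|x-b\|^2 = 2^{-2k/d}\|y\|^2$, so that
\begin{equation}
\int_{\mathbb{R}^d} |\widetilde{\psi}_{k,b}(x)|\, \|x-b\|^2\, dx = 2^{\,k/2 - k - 2k/d}\int_{\mathbb{R}^d} |\widetilde{\psi}(y)|\,\|y\|^2\,dy = 2^{-k/2 - 2k/d}\,\int_{\mathbb{R}^d} |\widetilde{\psi}(y)|\,\|y\|^2\,dy.
\end{equation}
The remaining integral is a finite constant independent of $k,b$ because $\widetilde{\psi}$ decays faster than any polynomial (Proposition~\ref{prop:fastDecay}), so $\|y\|^2|\widetilde{\psi}(y)|$ is integrable. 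Collecting the powers of $2$ yields the claimed rate $O\!\left(2^{-(2k/d + k/2)}\right)$.

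The main obstacle is the bookkeeping around the dual elements rather than any hard estimate: I must confirm that $\widetilde{\psi}_{k,b}$ carries exactly the same dyadic scaling and centering as the primal $\psi_{k,b}$, that its zeroth and first moments vanish \emph{relative to the center} $b$, and that the weighted norm $\int \|y\|^2|\widetilde{\psi}(y)|\,dy$ is genuinely finite and uniform in $(k,b)$. All three facts are supplied by Propositions~\ref{prop:vanishing} and~\ref{prop:fastDecay}, after which the argument is a routine change of variables; the compact support and global Hessian bound on $f$ guarantee the Taylor remainder estimate holds everywhere.
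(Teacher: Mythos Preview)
Your proposal is correct and follows essentially the same approach as the paper: both arguments use the two vanishing moments of $\widetilde{\psi}_{k,b}$ (Proposition~\ref{prop:vanishing}) to reduce to the quadratic Taylor remainder, then invoke the dyadic scaling $\widetilde{\psi}_{k,b}(x)=2^{k/2}\widetilde{\psi}(2^{k/d}(x-b))$ together with Proposition~\ref{prop:fastDecay} to bound $\int|\widetilde{\psi}(y)|\,\|y\|^2\,dy$. The only cosmetic difference is that the paper performs the change of variables before Taylor-expanding, whereas you expand first and substitute afterward; the resulting powers of $2$ are identical.
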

\begin{proof}
\begin{align}
\langle\widetilde{\psi}_{k,b},f\rangle  &=2^\frac{k}{2}\int_{\mathbb{R}^d }\widetilde{\psi}\left(2^\frac{k}{d}(x-b)\right)f(x)dx\\
&= 2^{-\frac{k}{2}}\int_{\supp(\widetilde{\psi})}\widetilde{\psi}(y)f(2^{-\frac{k}{d}} y + b)dy.
\end{align}
where we have used change of variables. 
Since that $f$ is twice differentiable, we can replace $f$ by its Taylor expansion near $b$ 
\begin{align}
&\int_{\supp(\widetilde{\psi})}\widetilde{\psi}(y)f(2^{-\frac{k}{d}} y + b)dy \\
=&\int_{\supp(\widetilde{\psi})}\widetilde{\psi}(y)\left(f(b) + 2^{-\frac{k}{d}} \langle y, \nabla_f(b) \rangle + O(\|\nabla^2_f(b) \|_{op}(2^{-\frac{k}{d}} \|y\|_2)^2)\right)dy. \label{eq:epsilon2}
\end{align}
By Proposition \ref{prop:vanishing} $\widetilde{\psi}$ has two vanishing moments; this gives
\begin{align}
|\langle\widetilde{\psi}_{k,b},f\rangle| & =  O\left(2^{-({2\frac{k}{d}}+\frac{k}{2} )}\|\nabla^2_f(b) \|_{op}\int_{\supp(\widetilde{\psi})}\widetilde{\psi}(y)\|y \|_2^2dy\right)
\end{align}
Since by Proposition~\ref{prop:fastDecay} $\widetilde{\psi}(y)$ decays exponentially fast, the integral $\int_{\supp(\widetilde{\psi})}\widetilde{\psi}(y)\|y \|_2^2dy$ is some finite number. As a result, 
\begin{equation}
|\langle\widetilde{\psi}_{k,b},f\rangle| = O(2^{-({2\frac{k}{d}}+\frac{k}{2} )}).
\end{equation}

\end{proof}

We will also use the following property:
\begin{remark}\label{remark:numberOfTerms}
 Every $x$ is in the support of at most $12^d$ wavelet terms at every scale. 
\end{remark}

We are now ready to prove Lemma \ref{lemma:c2Approx}
\begin{proof}
Let $f\in L_2(\mathbb{R}^d)$, $d \le 3$ be compactly supported, twice differentiable and with $\|\nabla^2_f\|_{op}$ bounded. $f$ can be expressed as 
\begin{equation}
f = \sum_{(k,b) \in \Lambda }\langle \widetilde{\psi}_{k,b}, f\rangle \psi_{k,b}. 
\end{equation}
approximating $f$ by $f_K$, which  only consists of the wavelet terms of scales $k \le K$, we obtain that for every $x \in \mathbb{R}^d$
\begin{equation}
|f(x) - f_K(x)| \le \sum_{k=K+1}^\infty \sum_{b \in 2^{-k}\mathbb{Z}}|\psi_{k,b} |\langle \widetilde{\psi}_{k,b}, f\rangle. \label{eq:111}
\end{equation}
By Remark \ref{remark:numberOfTerms}, at most $12^d$ wavelet terms are supported on $x$ at every scale; by Proposition \ref{prop:boundedWavelet} $|\psi_{k, b}| \le 2^{\frac{k}{2}-2}$; by Proposition \ref{prop:coeffs} $|\langle\widetilde{\psi}_{k,b},f\rangle| = O(2^{-(\frac{2k}{d} + \frac{k}{2})})$. Plugging these into Equation \eqref{eq:111} gives
\begin{align}
|f(x) - f_K(x)| &= O \left(\sum_{k=K+1}^\infty 12^d 2^{\frac{k}{2}-2} 2^{-(\frac{2k}{d} + \frac{k}{2})}\right) \\
&=O \left(\sum_{k=K+1}^\infty 2^{-\frac{2k}{d}}\right)\\
&=O \left(2^{-\frac{2K}{d}}\right).
\end{align}
\end{proof}

 \end{document}